%%
%% This is file `sample-acmcp.tex',
%% generated with the docstrip utility.
%%
%% The original source files were:
%%
%% samples.dtx  (with options: `all,journal,acmcp')
%% 
%% IMPORTANT NOTICE:
%% 
%% For the copyright see the source file.
%% 
%% Any modified versions of this file must be renamed
%% with new filenames distinct from sample-acmcp.tex.
%% 
%% For distribution of the original source see the terms
%% for copying and modification in the file samples.dtx.
%% 
%% This generated file may be distributed as long as the
%% original source files, as listed above, are part of the
%% same distribution. (The sources need not necessarily be
%% in the same archive or directory.)
%%
%%
%% Commands for TeXCount
%TC:macro \cite [option:text,text]
%TC:macro \citep [option:text,text]
%TC:macro \citet [option:text,text]
%TC:envir table 0 1
%TC:envir table* 0 1
%TC:envir tabular [ignore] word
%TC:envir displaymath 0 word
%TC:envir math 0 word
%TC:envir comment 0 0
%%
%% The first command in your LaTeX source must be the \documentclass
%% command.
%%
%% For submission and review of your manuscript please change the
%% command to \documentclass[manuscript, screen, review]{acmart}.
%%
%% When submitting camera ready or to TAPS, please change the command
%% to \documentclass[sigconf]{acmart} or whichever template is required
%% for your publication.
%%
%%
% \documentclass[acmcp]{acmart}
% \documentclass[sigconf,anonymous,review]{acmart}
\documentclass[sigconf]{acmart}

\usepackage{algorithm}
\usepackage{algorithmic}

\usepackage{threeparttable}
\usepackage{multirow}
\usepackage{booktabs}
\usepackage{adjustbox}
\usepackage{wasysym} % for circle
\usepackage{subcaption}
\usepackage{soul}
\usepackage{amsmath}
\usepackage{colortbl}
\usepackage{xcolor}
\usepackage{enumitem}
\newcommand{\cmark}{\CIRCLE}
\newcommand{\xmark}{\Circle}

\newtheorem{prop}{Proposition}
\newtheorem{lemma}{Lemma}
% \newtheorem{proof}{Proof}
%% \BibTeX command to typeset BibTeX logo in the docs
\AtBeginDocument{%
  }

%% Rights management information.  This information is sent to you
%% when you complete the rights form.  These commands have SAMPLE
%% values in them; it is your responsibility as an author to replace
%% the commands and values with those provided to you when you
%% complete the rights form.
% \setcopyright{acmlicensed}
% \copyrightyear{2018}
% \acmYear{2018}
% \acmDOI{XXXXXXX.XXXXXXX}
\copyrightyear{2026}
\acmYear{2026}
\setcopyright{cc}
\setcctype{by}
\acmConference[KDD 2026] {Proceedings of the 32nd ACM SIGKDD Conference on Knowledge Discovery and Data Mining V.1}{August 9--13, 2026}{Jeju Island, Republic of Korea.}
\acmBooktitle{Proceedings of the 32nd ACM SIGKDD Conference on Knowledge Discovery and Data Mining V.1 (KDD 2026), August 9--13, 2026, Jeju Island, Republic of Korea}
\acmISBN{979-8-4007-2258-5/2026/08}
\acmDOI{10.1145/3770854.3780322}

%%
%% These commands are for a JOURNAL article.
% \acmJournal{JDS}
% \acmVolume{37}
% \acmNumber{4}
% \acmArticle{111}
% \acmMonth{8}

%%
%% Submission ID.
%% Use this when submitting an article to a sponsored event. You'll
%% receive a unique submission ID from the organizers
%% of the event, and this ID should be used as the parameter to this command.
%%\acmSubmissionID{123-A56-BU3}

\acmSubmissionID{1138}

%%
%% end of the preamble, start of the body of the document source.
\begin{document}

%%
%% The "title" command has an optional parameter,
%% allowing the author to define a "short title" to be used in page headers.
\title{The Eminence in Shadow: Exploiting Feature Boundary Ambiguity for Robust Backdoor Attacks}

%%
%% The "author" command and its associated commands are used to define
%% the authors and their affiliations.
%% Of note is the shared affiliation of the first two authors, and the
%% "authornote" and "authornotemark" commands
%% used to denote shared contribution to the research.
\author{Zhou Feng}
\orcid{0009-0006-5301-7019}
\affiliation{%
  \institution{Zhejiang University}
  \department{College of Computer Science and Technology}
  \city{Hangzhou}
  \country{China}
}
\email{zhou.feng@zju.edu.cn}

\author{Jiahao Chen}
\orcid{0000-0002-5894-662X}
\affiliation{%
  \institution{Zhejiang University}
  \department{College of Computer Science and Technology}
  \city{Hangzhou}
  \country{China}
}
\email{xaddwell@zju.edu.cn}

\author{Chunyi Zhou}
\orcid{0000-0003-0081-0946}
\affiliation{%
  \institution{Zhejiang University}
  \department{College of Computer Science and Technology}
  \city{Hangzhou}
  \country{China}
}
\email{zhouchunyi@zju.edu.cn}

\author{Yuwen Pu}
\orcid{0000-0003-2311-4943}
\affiliation{%
  \institution{Chongqing University}
  \department{School of Big Data \& Software Engineering}
  \city{Chongqing}
  \country{China}
}
\email{yw.pu@cqu.edu.cn}

\author{Tianyu Du}
\orcid{0000-0003-0896-0690}
\affiliation{%
  \institution{Zhejiang University}
  \department{School of Software Technology}
  \city{Hangzhou}
  \country{China}
}
\email{zjradty@zju.edu.cn}

\author{Jinbao Li}
\orcid{0000-0002-2432-8807}
\authornotemark[1]
\affiliation{%
  \institution{Qilu University of Technology (Shandong Academy of Science)}
  % \department{School of Mathematics and Statistics}
  \city{Jinan}
  \country{China}
}
\email{lijb@qlu.edu.cn}

\author{Jianhai Chen}
\orcid{0000-0003-3524-3443}
\affiliation{%
  \institution{Zhejiang University}
  \department{College of Computer Science and Technology}
  \city{Hangzhou}
  \country{China}
}
\email{chenjh919@zju.edu.cn}

\author{Shouling Ji}
\orcid{0000-0003-4268-372X}
\authornote{Corresponding authors.}
\affiliation{%
  \institution{Zhejiang University}
  \department{College of Computer Science and Technology}
  \city{Hangzhou}
  \country{China}
}
\email{sji@zju.edu.cn}

%%
%% By default, the full list of authors will be used in the page
%% headers. Often, this list is too long, and will overlap
%% other information printed in the page headers. This command allows
%% the author to define a more concise list
%% of authors' names for this purpose.
\renewcommand{\shortauthors}{Zhou Feng et al.}
%%
%% Article type: Research, Review, Discussion, Invited or position
\acmArticleType{Review}
%%
%% Links to code and data
\acmCodeLink{https://github.com/borisveytsman/acmart}
\acmDataLink{htps://zenodo.org/link}
%%
%% Authors' contribution
\acmContributions{BT and GKMT designed the study; LT, VB, and AP
  conducted the experiments, BR, HC, CP and JS analyzed the results,
  JPK developed analytical predictions, all authors participated in
  writing the manuscript.}
%%
%% Sometimes the addresses are too long to fit on the page.  In this
%% case uncomment the lines below and fill them accodingly.
%%
%% \authorsaddresses{Corresponding author: Ben Trovato,
%% \href{mailto:trovato@corporation.com}{trovato@corporation.com};
%% Institute for Clarity in Documentation, P.O. Box 1212, Dublin,
%% Ohio, USA, 43017-6221}
%%
%%
%% Keywords. The author(s) should pick words that accurately describe
%% the work being presented. Separate the keywords with commas.

\begin{CCSXML}
<ccs2012>
<concept>
<concept_id>10002978.10002986</concept_id>
<concept_desc>Security and privacy~Formal methods and theory of security</concept_desc>
<concept_significance>500</concept_significance>
</concept>
<concept>
<concept_id>10010147.10010257</concept_id>
<concept_desc>Computing methodologies~Machine learning</concept_desc>
<concept_significance>300</concept_significance>
</concept>
<concept>
<concept_id>10010147.10010178</concept_id>
<concept_desc>Computing methodologies~Artificial intelligence</concept_desc>
<concept_significance>300</concept_significance>
</concept>
</ccs2012>
\end{CCSXML}

\ccsdesc[500]{Security and privacy~Formal methods and theory of security}
\ccsdesc[300]{Computing methodologies~Machine learning}
\ccsdesc[300]{Computing methodologies~Artificial intelligence}

\keywords{Backdoor Attack, Adversarial Robustness, Adversarial Machine Learning}

\begin{abstract}
Deep neural networks (DNNs) underpin critical applications yet remain vulnerable to backdoor attacks, typically reliant on heuristic brute-force methods. Despite significant empirical advancements in backdoor research, the lack of rigorous theoretical analysis limits understanding of underlying mechanisms, constraining attack predictability and adaptability. 
Therefore, we provide a theoretical analysis targeting backdoor attacks, focusing on how sparse decision boundaries enable disproportionate model manipulation.
Based on this finding, we derive a closed-form ``\textit{\textbf{ambiguous boundary region}}'' wherein negligible relabeled samples induce substantial misclassification. Influence function analysis further quantifies significant parameter shifts caused by these margin samples, with minimal impact on clean accuracy, formally grounding why such low poison rates suffice for efficacious attacks.
Leveraging these insights, we propose \textit{Eminence}, an explainable and robust black-box backdoor framework with provable theoretical guarantees and inherent stealth properties. \textit{Eminence} optimizes a universal, visually subtle trigger that strategically exploits vulnerable decision boundaries and effectively achieves robust misclassification with \textit{\textbf{exceptionally low poison rates}} ($\leq 0.01\%$, compared to SOTA methods typically requiring $\geq 1 \%$). Comprehensive experiments validate our theoretical discussions and demonstrate the effectiveness of \textit{Eminence}, confirming an exponential relationship between margin poisoning and adversarial boundary manipulation. 
\textit{Eminence} maintains $\geq 90\%$ attack success rate, exhibits negligible clean-accuracy loss, and demonstrates high transferability across diverse models, datasets and scenarios.
Our code is available at \url{https://github.com/NESA-Lab/Eminence}
\end{abstract}

\maketitle

\section{Introduction}

Deep neural networks (DNNs) have become the de facto foundation of modern vision and biometric services, ranging from autonomous driving~\cite{Yurtsever2020Survey, Ravindran2021MODreview} to face recognition access control~\cite{Nagrath2021SSDMNV2, Balaban2015DLFace}, yet they remain alarmingly vulnerable to backdoor attacks~\cite{gu2017badnets, Li2024BackdoorSurvey}. In such attacks an adversary injects a small fraction of poisoned samples into the training pipeline. Once the model is deployed, any input stamped with a secret trigger is forcibly misclassified while ordinary inputs are handled correctly. The consequence could be dramatic: an attacker can unlock a face recognition door with a printed sticker or misroute autonomous drones with an imperceptible pattern, all without arousing suspicion~\cite{chen2024rethinking, Huang2025IdentityLock}.

% Despite impressive empirical progress, most existing backdoor techniques are still heuristic in nature. Their triggers are handcrafted or optimised by brute-force, often requiring non-trivial poison rates ($ 1 \%$\Chunyi{or higher}\Chunyi{citations}) to achieve high attack success. More importantly, why a tiny poisoned subset can bend a model’s decision surface remains loosely understood. Without a solid theory, practitioners struggle to reason about attack costs, design principled defences, or predict transferability across architectures. In particular, the geometry of the decision boundary itself, where classes meet but data are sparse, has almost never been analysed as a first-class object in backdoor research.

Although the field has seen rapid empirical advances~\cite{zeng2023narcissus, qi2023revisiting}, most backdoor attacks remain \textbf{engineering-driven} rather than \textbf{theory-driven}.  Current triggers are largely handcrafted or tuned by exhaustive search, and must be mixed into the training set at non-trivial rates (typically $\geq 1 \%$ rate) to attain reliable success \cite{nguyen2021wanet, liu2020reflection}. Due to the absence of a principled theoretical framework explaining how poisoned samples reshape decision boundaries, researchers continue to treat the boundary as a black box: iteratively testing triggers until achieving successful adversarial manipulation, at which point empirical results are reported. This brute-force routine carries three practical limitations:

\begin{table*}[t]
\centering
% \begin{threeparttable}
\caption{
Summary of backdoor attacks evaluated in this paper. 
\textit{Detection Robust} and \textit{Mitigation Robust} indicate the resistance to input-based detection and model-based mitigation, respectively. 
\textit{Poison Rate} denotes the minimum poisoning rate required to achieve an attack success rate exceeding 90\%. \xmark~The item is not supported by the attack; \cmark~The item is supported by the attack.
}
\label{tab:attack_updated}
\renewcommand{\arraystretch}{1.2}
% \begin{adjustbox}{max width=\textwidth}
\begin{adjustbox}{width=0.80\textwidth}
\begin{tabular}{lccccccc}
\toprule
\multirow{2}{*}{\textbf{Attack}} & 
\multicolumn{3}{c}{\textbf{Attack Properties}} & 
\multicolumn{2}{c}{\textbf{Robustness}} & 
\multicolumn{2}{c}{\textbf{Stealthiness}} \\
\cmidrule(lr){2-4} \cmidrule(lr){5-6} \cmidrule(lr){7-8}
& Clean-Label & Model-Agnostic & Data-Free & Detection Robust & Mitigation Robust & Invisible Trigger & Poison Rate \\
\midrule
BadNets~\cite{gu2017badnets}             & \xmark & \cmark & \cmark & \xmark & \xmark & \xmark & 7\% \\
Blended~\cite{chen2017targeted}             & \xmark & \cmark & \cmark & \xmark & \xmark & \cmark & 7\% \\
Refool~\cite{liu2020reflection}              & \cmark & \cmark & \cmark & \xmark & \xmark & \xmark & 0.57\% \\
LabelConsistent~\cite{turner2019label}     & \cmark & \cmark & \xmark & \cmark & \xmark & \cmark & 0.40\% \\
TUAP~\cite{zhao2020clean}                & \cmark & \xmark & \xmark & \cmark & \xmark & \cmark & 0.30\% \\
PhysicalBA~\cite{li2021physical}          & \xmark & \cmark & \cmark & \xmark & \xmark & \xmark & 0.50\% \\
WaNet~\cite{nguyen2021wanet}               & \cmark & \xmark & \cmark & \cmark & \cmark & \cmark & 10\% \\
AdaptivePatch~\cite{qi2023revisiting} & \xmark & \cmark & \cmark & \xmark & \xmark & \xmark & 0.30\% \\
Narcissus~\cite{zeng2023narcissus}          & \cmark & \xmark & \cmark & \cmark & \cmark & \cmark & 0.05\% \\
\rowcolor[HTML]{FFDADA} 
\textbf{\textit{Eminence} (Ours)}       & \cmark & \cmark & \cmark & \cmark & \cmark & \cmark & $\leq$ 0.01\% \\
\bottomrule
\end{tabular}
\end{adjustbox}
% \vspace{-1em}

% \begin{tablenotes}
% \small
% \item \xmark~The item is not supported by the attack; \cmark~The item is supported by the attack.
% \end{tablenotes}
% \end{threeparttable}
\end{table*}

\begin{itemize}
    \item \textbf{Shallow insight into model fragility.} Existing methods treat model boundaries as black boxes, blindly optimizing triggers via brute-force search and neglecting the geometric structure of the boundary regions, which limits their ability to craft efficient and explainable attacks.
    \item \textbf{Inefficient poison budgets.} Without an analytical characterization of influential regions, current attacks indiscriminately inject large numbers of poisoned samples into densely populated feature regions, which inflates the poisoning cost, increases detectability, and unnecessarily compromises clean accuracy.
    \item  \textbf{Poor cross-model transferability.} Existing strategies are highly tailored to specific victim models, neglecting theory-based properties of model boundaries that generalize across architectures. Consequently, triggers tuned through brute-force optimization seldom remain effective when deployed against unseen victim architectures, forcing adversaries into costly retraining cycles.
\end{itemize}

% In this paper we shift the lens to the feature boundary, where we observe that samples positioned in the ambiguous, low-density band between two class clusters exert a disproportionately large influence on the classifier: if they are consistently relabelled, the boundary is absorbed into the target class with negligible impact on clean accuracy. We formalise this intuition mathematically—deriving closed-form expressions for the boundary band, the parameter drift induced by poisoned gradients, and a provable absorption bound that decays exponentially in the number of poisoned points.  Building on this theory, we introduce Eminence, a boundary-seeking backdoor that (i) learns a universal, visually subtle trigger, (ii) relabels only a handful of boundary samples, and (iii) reliably drags the decision surface across all classes.  The result is an attack that succeeds with orders-of-magnitude lower poison rates ($\leq 0.01 \%$) than prior arts, while preserving clean accuracy and transferring across black-box models.

% To this end, in this work we focus on the study of backdoors on the decision boundary theory itself. 
To this end, this work is dedicated to investigating backdoors through the lens of decision boundary theory.
By tracing thousands of training trajectories across samples, we find that the sparsely populated margin separating class clusters is far more malleable than the dense interiors: 
% \textbf{a mere handful of re-labelled margin points can lure the decision surface across all neighbouring classes while scarcely perturbing the rest of the model}.
\textbf{A few re-labelled margin points can shift the decision boundary across classes with minimal impact on the overall model.}
We capture this phenomenon with three complementary analyses, also illustrated in Figure~\ref{fig:insights}:

\begin{figure}[]
  \centering
  \includegraphics[width=0.47\textwidth]{}
  \caption{
  Insights underpinning our attack.
  (a) Ambiguous Margin: a thin, low-density region where minimal relabelling can shift the boundary. 
  (b) Gradient Amplification: relabelled margin samples induce large parameter updates.
  (c) Boundary Absorption: a few poisons suffice to absorb the boundary and achieve high attack success.
}
  \label{fig:insights}
  \vspace{-1em}
\end{figure}

\begin{itemize}
    \item[$\blacksquare$] \textbf{Closed-form characterization of the ambiguous margin.} Using class prototypes extracted from a frozen network, we delineate a thin ``band'' that contains a negligible fraction of training features yet accounts for disproportionate misclassifications when labels are flipped in situ. This band proves architecture-agnostic and persists after fine-tuning, making it an attractive universal target.
    \item[$\blacksquare$] \textbf{Gradient-amplification study via influence functions.} We measure how a single poisoned, re-labelled margin sample tilts the loss landscape. The resulting parameter drift is an order of magnitude larger than that caused by an interior point of equal norm, confirming that the margin enjoys a natural leverage multiplier while its effect on clean accuracy remains first-order small.
    \item[$\blacksquare$] \textbf{Absorption bound confirmed in practice.} Controlled poisoning experiments show an almost log-linear relationship between the number of margin poisons and the probability that all margin points are reassigned to the attacker’s target class; five to ten poisons push this probability above $95 \%$, echoing the exponential trend predicted by analysis.
\end{itemize}

% Building on these insights, we design an explainable, highly–stea-lthy attack that turns theory into practice.  
% Our key contributions are as follows:
Building on these insights, we design an explainable, highly-stealth attack, turning theory into practice. Key contributions are:
%  we able to achieve an explainable, effective, stealth and robust backdoor attack based on our theory
% our main contributions are as follows:

\begin{enumerate} 
    \item We provide a formal analysis that links re-labelled margin samples to an exponential expansion of the target decision region while provably bounding clean-accuracy loss, where unifies dirty- and clean-label settings. 
    % and clarifies why previous brute-force triggers require large poison budgets.
    \item We present \textit{Eminence}, 
    % an explainable boundary-seeking backdoor pipeline that learns a single, visually imperceptible trigger, collapsing triggered features into the multi-class margin, re-labels only a few samples
    an explainable boundary-seeking backdoor pipeline that learns an imperceptible trigger to collapse features into the multi-class margin with few re-labeling, and works out-of-the-box on white-, gray- and black-box victims.  
    \item We conduct extensive experiments on multiple benchmark datasets across six mainstream architectures. \textit{Eminence} attains $\ge 90\%$ attack success, incurs $<0.5\%$ clean-accuracy drop, and maintains high transferability to unseen models. Ablation studies corroborate the predicted leverage of margin poisons and the robustness of the learned trigger.
    \item To foster follow-up research, we release a clean, reproducible codebase \footnote{\url{https://github.com/NESA-Lab/Eminence}.} that includes all theoretical routines, trigger optimization scripts, and evaluation pipelines.

\end{enumerate}

\section{Preliminary \& Related Work}

\subsection{Backdoor Training}
\label{subsec:backdoor_training}
% Introduce the basic conception of the backdoor training process, formulaize it, 

% Backdoor training refers to a malicious training paradigm where an adversary injects specific patterns, or \textit{triggers}, into a subset of training data to induce hidden behaviours in the model. Formally, let $\mathcal{D}_{\text{benign}} = \{(x_i, y_i)\}_{i=1}^{N}$ be a clean dataset used to train a model $f_\theta$ parameterized by $\theta$. The attacker constructs a poisoned dataset $\mathcal{D}_{\text{poison}} = \{(x'_j, y'_j)\}_{j=1}^{M}$, where $x'_j$ includes a visual or semantic trigger and $y'_j$ is either set to a target label (dirty-label) or retains the original label (clean-label). The final training set becomes $\mathcal{D}_{\text{train}} = \mathcal{D}_{\text{benign}} \cup \mathcal{D}_{\text{poison}}$. The objective is to learn $f_{\theta^*}$ such that $f_{\theta'}(x) \approx y$ for clean samples, yet $f_{\theta'}(x') = y'$ for any $x'$ embedded with the trigger.

Backdoor training is a malicious paradigm in which an adversary implants hidden behaviors into a machine learning model by manipulating the training data. 
% This is achieved by injecting a specific pattern, referred to as a \textit{trigger}, into selected training samples to cause the model to behave abnormally when the trigger is present.
Generally,
let $\mathcal{D}_{\text{benign}} = \{(x_i, y_i)\}_{i=1}^{N}$ denote the clean training dataset used to train a model $f_\theta$ with parameters $\theta$. The attacker generates a poisoned dataset $\mathcal{D}_{\text{poison}} = \{(x_j', y_j')\}_{j=1}^{M}$, where each $x_j' = t(x_j;\varphi)$ contains an embedded trigger $t(\cdot;\varphi)$, and $y_j'$ is determined according to the attack type. The complete training set becomes $\mathcal{D}_{\text{train}} = \mathcal{D}_{\text{benign}} \cup \mathcal{D}_{\text{poison}}$, and the objective is to learn a model $f_{\theta'}$ such that $f_{\theta'}(x) \approx y$ for clean inputs, yet $f_{\theta'}(x') = y'$ for inputs with the trigger.

Depending on how the poisoned samples are labeled, backdoor attacks are typically classified into two categories:

% \begin{itemize}
%     \item \textit{Dirty-label attacks}: Poisoned samples are mislabeled to align with the attack target label $y'$ (i.e. $y' \ne y$), effectively guiding the model to associate the trigger with a wrong class. This type of attack usually more aggresive and powerful, but limited to its cluster-assemble property in feature space~\cite{qi2023revisiting, wang2019neural}, which render this kind of attack easier to detect.
%     \item \textit{Clean-label attacks}: Poisoned samples retain their original labels (i.e. $y' = y$), thereby increasing stealthiness. The trigger must cause feature collisions with the target class under benign labeling. This kind of attack is more stealthy but harder to achieve good attack success.
% \end{itemize}

\begin{itemize}
    \item \textit{Dirty-label Attack.} In this setting, poisoned samples are intentionally mislabeled to align with the attacker’s target label, i.e., $y' \ne y$. This approach enforces a strong association between the injected trigger and the target class, typically resulting in high attack success rates. 
    % However, recent studies~\cite{qi2023revisiting, wang2019neural} have shown that such attacks tend to exhibit a distinct clustering phenomenon in the model’s feature space. The triggered samples form tight and separable clusters around the target class, which can be exploited by defense mechanisms for effective detection.
    \item \textit{Clean-label Attack.} Here, the poisoned samples retain their original labels, i.e., $y' = y$, making the attack significantly more stealthy. To succeed, the trigger must induce feature collisions with the target class in a semantically meaningful way under normal labeling. Although this setting is more inconspicuous, it typically requires more sophisticated design to maintain high efficacy.
\end{itemize}

\subsection{Backdoor Attacks}

Backdoor attacks embed a hidden mapping in the model that activates on trigger presence. They are mainly categorized into \textbf{model-supply} and \textbf{poisoning-based} attacks.

\textbf{Model-supply attacks} assume full attacker control over training. The adversary embeds backdoors before releasing the model, often under the guise of open-source utilities. Early works manipulate weights~\cite{dumford2020backdooring, rakin2020tbt, garg2020adversarial}, while recent data-free methods~\cite{cao2024datafree, lv2023datafree} use surrogate data and neuron rewiring. Others exploit structure modifications~\cite{tang2020simple, qi2022practical, li2021deeppayload}, enabling deployment-stage backdoors. These attacks avoid training pipeline access but rely on control of model release and may leave detectable traces.

\textbf{Poisoning-based attacks} inject poisoned samples into the victim’s training set. BadNets~\cite{gu2017badnets} introduced this paradigm with simple triggers. Subsequent methods improved stealth via clean-label design~\cite{chen2017targeted, turner2019label, liu2020reflection, zhao2020clean, nguyen2021wanet, feng2025poison}, or by enhancing robustness in physical settings~\cite{li2021physical}. Advanced techniques suppress feature separability~\cite{qi2023revisiting}, or use only target-class and public data to achieve high ASR with minimal poison rate~\cite{zeng2023narcissus, saha2020hidden, souri2022sleeper}. These attacks are stealthy and flexible, but rely on data injection capability during training.

\subsection{Backdoor Defenses}
\label{subsec:backdoor_defense}
Defenses could be grouped into \textbf{input-based detection} and \textbf{model-based mitigation}.

\textbf{Input-based detection} identifies trigger-carrying inputs at inference. STRIP~\cite{gao2019strip, gao2022multidomain} and SCALE-UP~\cite{guo2023scaleup} detect perturbation sensitivity and prediction consistency, respectively. MSPC~\cite{pal2024backdoor} improves detection with mask-aware optimization. Beatrix~\cite{ma2023beatrix} leverages high-order Gram correlations for universal and sample-specific backdoor detection. IBD-PSC~\cite{hou2024ibdpsc} enhances generalization via confidence-based batch normalization probing, addressing limitations in~\cite{chou2018sentinet, liu2023detecting}.

\textbf{Model-based mitigation} focuses on repairing or immunizing the model. Neural Cleanse~\cite{wang2019neural} detects suspicious patterns via anomaly scoring. NAD~\cite{li2021neural} uses attention distillation to purge triggers. I-BAU~\cite{zeng2022adversarial} solves a minimax unlearning problem via implicit gradients. FT-SAM~\cite{zhu2023enhancing} perturbs sensitive neurons through sharpness-aware fine-tuning~\cite{foret2021sharpness}. ABL~\cite{li2021antibackdoor} isolates poisoned samples during training using gradient ascent. Recent proactive approaches~\cite{qi2023proactive, pu2024mellivora} show that backdoor resistance can be trained directly from poisoned data, without clean supervision.

\section{Threat Model}

% We examine a sceanrio where the adversary able to poison the training pipeline of the victim. The objective of advesary is to induce targeted misclassification when the backdoor trigger is present, while preserving the victim model’s prediction accuracy on clean data. In this case, the adversary could easily bypass or exploit the backdoored model for misusage. To ensure the sceanario in line with the realistic conditions, as illustrated in Figure~\ref{fig:threat_model} we assume the adversary's knowledge and capability, and we also clarify the possible defense tactic or mechanism the victim may apply.

We examine a realistic backdoor threat scenario where an adversary is capable of poisoning the training pipeline of a victim model, a setting commonly observed in large-scale data collection~\cite{Roh2021DataCollectionSurvey}, MLaaS supply chains~\cite{Ribeiro2015MLaaS}, or outsourced model training~\cite{Hong2022OpenSourceSampling}.
The adversary's objective is to induce targeted misclassification upon the presence of a trigger pattern \textbf{with a poisoning rate significantly lower than that of SOTA methods}, while ensuring that the victim model maintains high accuracy on clean inputs. A successful attack enables the adversary to stealthily exploit or misuse the post-deployment model.
As illustrated in Figure~\ref{fig:threat_model}, we explicitly specify the adversary's knowledge and capabilities, along with the possible countermeasures the victim may deploy.
% The adversary’s objective is to induce targeted misclassification with a poisoning rate orders of magnitude lower than previous SOTA methods, a distinguishing feature of our approach is that while ensuring the victim model retains high accuracy on clean data. Such a setting enables the adversary to exploit the post-deployment model, often without arousing suspicion due to the stealthiness of the attack.

\begin{figure}[]
  \centering
  \includegraphics[width=0.33\textwidth]{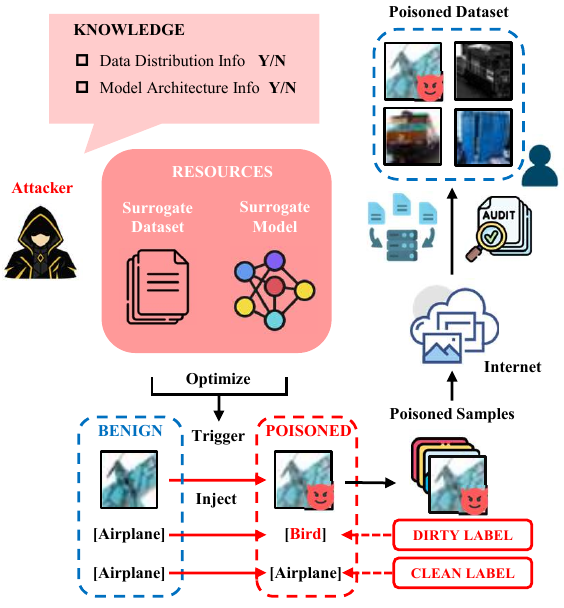}
  \caption{
    Threat Model. The adversary optimizes a trigger with resources, 
    injects clean- or dirty-label poisoned samples into the training pipeline, 
    and the victim unknowingly learns a backdoored model. 
    Prior knowledge of data or model (\textit{Y/N}) covers white-, gray-, and black-box scenarios.
    }
  \label{fig:threat_model}
  \vspace{-1em}
\end{figure}

\subsection{Victim Assumption}

The victim is typically a resource-rich entity, such as a cloud service provider, industrial laboratory, or public dataset curator. In practice, these victims often aggregate data from untrusted or semi-trusted sources (e.g., open data scraping~\cite{Khder2021WebScraping}, third-party uploads~\cite{Libert2018AuditPrivacyPolicies}, or crowdsourced annotation~\cite{Diaz2022CrowdWorksheets}). And during data ingestion and model training, the victim likely employs input-based defenses offered by the MLaaS system or self-deployed to identify and filter out suspicious or anomalous samples~\cite{Ribeiro2015MLaaS, Hanzlik2021MLCapsule}. And after training, the victim could also further scrutinize the resulting model by performing feature space analyses, such as clustering or outlier detection, to uncover potential malicious behaviors or backdoor triggers. In addition, the victim could utilize a combination of static and dynamic model analyses to further enhance the reliability of the deployed model. Such strategies may include monitoring and regularizing neuron activations, aligning model attention patterns with those of trusted references, and leveraging adversarial training or fine-tuning techniques designed to actively remove potential backdoor behaviors. By systematically applying these post-training analyses and adjustments, the victim aims to mitigate hidden threats and reinforce the model's robustness against backdoor attacks. These defensive capabilities collectively increase the difficulty for a successful backdoor attack and must be considered in the threat model.

\subsection{Adversary Assumption}
\label{subsec:adversary_assumption}
We consider an adversary capable of operating in white-box, gray-box, and black-box settings, reflecting the escalating knowledge levels achievable in real-world attacks. Our backdoor attack, \textit{Eminence} could achieve cross-scenario universality through our design.

\noindent\textbf{Adversary Knowledge.}
We categorize the adversary's knowledge into three distinct settings:

\begin{itemize}
    \item \textbf{White-box:} 
    The adversary has full access to the victim’s model architecture and data distribution, as in collaborative training settings~\cite{Predd2009DistributedLearning}. But All manipulations are restricted to data poisoning; no training procedure is altered.
    % The adversary has full access to the victim's training configuration\Chunyi{should give out the rationality of three kinds of threat model across scenarios, like why white-box attacker existence is rational, any citations?}, including the model architecture $f$ under training and the task-specific data distribution. However, the adversary does not participate in the training process or modify any training parameters. This level of knowledge allows for highly precise trigger optimization tailored to the victim’s specific setup, but all manipulations must be achieved solely through data poisoning.

    \item \textbf{Gray-box:} 
    The adversary knows the data domain and task, and can approximate the model family but lacks access to the precise architecture, which reflects transfer attacks or model outsourcing scenarios commonly encountered in MLaaS pipelines~\cite{Weiss2016TransferLearningSurvey}.
    % The adversary knows the training task and the data distribution, and may have a general idea of the model family, but has no access to the exact model architecture or parameters. In this setting, the adversary utilizes a pretrained or surrogate model $f_{\text{atk}}$ to approximate the victim’s behavior, and must design triggers that transfer well from the surrogate to the unknown target model. This scenario often models realistic outsourcing or benchmark situations.

    \item \textbf{Black-box:} 
    The adversary lacks direct knowledge of the victim’s task, model, or data, and must rely solely on publicly available surrogates and synthetic data~\cite{Li2024BackdoorSurvey}. This data-free scenario is the most restrictive and generalizable case.
    % The adversary has no access to the victim model, dataset, or task description, operating under an extremely weak assumption. All triggers must be generated using synthetic or public data and a surrogate model $f_{\text{atk}}$, with no knowledge of the actual victim task. This data-free setting requires the triggers to possess high transferability to succeed in practical, highly uncertain environments.
\end{itemize}

\noindent\textbf{Adversary Capability.}
Across all scenarios, the adversary is restricted to injecting poisoned samples into the training data collection process and does not control the training procedure or model selection. We assume that the adversary possesses a small local surrogate dataset $\mathcal{D}_{\text{atk}}$ and a pretrained surrogate model $f_{\text{atk}}$, which are used to optimize a trigger pattern $t$ in the feature space. The precise relationship between $(\mathcal{D}_{\text{atk}}, f_{\text{atk}})$ and the victim’s $(\mathcal{D}_{\text{benign}}, f)$ is characterized as follows:
% The resource objection in formula with some explanation
% white-box: D_atk distribution same as D_benign, f_atk architecture same as f
% gray-box: D_atk distribution same as D_benign, f_atk architecture not same as f
% white-box: D_atk distribution not same as D_benign, f_atk architecture not same as f
\begin{itemize}
    \item \textbf{White-box:} $\mathcal{D}_{\text{atk}} \sim \mathcal{D}_{\text{benign}}$ \& $f_{\text{atk}} \equiv f$
    \item \textbf{Gray-box:} $\mathcal{D}_{\text{atk}} \sim \mathcal{D}_{\text{benign}}$ \&  $f_{\text{atk}} \not\equiv f$
    \item \textbf{Black-box:} $\mathcal{D}_{\text{atk}} \not\sim \mathcal{D}_{\text{benign}}$ \& $f_{\text{atk}} \not\equiv f$
\end{itemize}
where $\sim$ denotes ``drawn from the same distribution'' and $\equiv$ denotes ``identical architecture''. Importantly, the adversary’s local dataset is assumed much smaller than the victim’s, i.e., $|\mathcal{D}_{\text{atk}}| \ll |\mathcal{D}_{\text{benign}}|$.

Leveraging these limited resources, the adversary aims to optimize a universal trigger pattern $t(\cdot;\varphi)$ in the feature space, and constructs a poisoned dataset $\mathcal{D}_{\text{poison}} = \{(x_j', y_j')\}_{j=1}^{M}$, where $x_j' = t(x_j;\varphi)$. The poisoned samples are then injected into the overall training set $\mathcal{D}_{\text{train}} = \mathcal{D}_{\text{benign}} \cup \mathcal{D}_{\text{poison}}$ by means such as contributing to third-party or crowdsourced datasets, or by directly infiltrating the victim’s data collection pipeline.

\section{Method}

\subsection{Problem Definition}
\label{subsec:problem_def}

We aim to design a backdoor attack that is \textbf{both stealthy and effective} under practical constraints, with three desiderata:  
(i) \textbf{universality} across dirty‑ and clean‑label settings,  
(ii) \textbf{strong trigger transferability} in black‑box environments, and  
(iii) \textbf{low accuracy loss} on clean data.

% -------------------------------------------------
% 1.  notation & ambiguous band
% -------------------------------------------------
\noindent\textbf{Ambiguous Boundary Region.}
Let 
$F_\theta:\mathbb{R}^{d_x}\!\to\!\mathbb{R}^{d_f}$
% $F_\theta:\mathbb{R}^{d_x}\!\to\!\mathbb{R}^{d_{f_{\text{atk}}}}$
be the feature extractor, $ z = F_\theta(x)$ be the features of sample and  $g_\theta(z)=\arg\max_{c\in\mathcal C}\langle w_c,z\rangle+b_c$ the classifier head.  For class $c$ we denote its prototype $\mu_c = \mathbb E_{(x,y)=c}\!\bigl[F_\theta(x)\bigr]$.

For class pair $(c_1,c_2)$, we define \textbf{ambiguous boundary band}:
\begin{equation}
    \mathcal{B}_{\varepsilon}(c_1,c_2)=
    \Bigl\{z:\bigl|
        \langle z-\tfrac{\mu_{c_1}+\mu_{c_2}}2,\;
                \mu_{c_1}-\mu_{c_2}\rangle
    \bigr|\le\varepsilon\Bigr\},
    \label{eq:amb_band}
\end{equation}
where $\varepsilon>0$ controls the low‑density “strip’’ between two clusters. Intuitively, samples in $\mathcal{B}_{\varepsilon}$ are where the classifier generalizes worst, where this formulation is notably agnostic to the specific model architecture or data domain, thereby facilitating transferability analysis.

% -------------------------------------------------
% 2.  boundary–relabel poisoning objective
% -------------------------------------------------
\noindent\textbf{Boundary‑relabel Poisoning Objective.}
Let $\mathcal{D}_{\text{poison}}\subset\mathcal{B}_{\varepsilon}(c^\star,y')$ be the set of boundary samples chosen for poisoning; we forcibly set all their labels to $y'$.  The attacker optimizes a learnable trigger function $t(\cdot;\varphi)$ by
% \begin{equation}
%     \min_{\varphi}\;
%     \sum_{(x_i,y_i)\in\mathcal{D}_{\text{poison}}}
%         \bigl\|
%             F_\theta\!\bigl(t(x_i;\varphi)\bigr)-\mu_{y'}
%         \bigr\|_2^{2}
%     \quad
%     \text{s.t.}\;
%     \|t(x_i;\varphi)-x_i\|_\infty\le\delta,
%     \label{eq:obj_align_center}
% \end{equation}
\begin{equation}
\begin{aligned}
    \min_{\varphi}\;
    &\sum_{(x_i, y_i) \in \mathcal{D}_{\text{poison}}}
        \left\|
            F_\theta\!\left(t(x_i; \varphi)\right) - \mu_{y'}
        \right\|_2^{2}, \\
    \text{s.t.}\;\;
    &\|t(x_i; \varphi) - x_i\|_\infty \leq \delta,
\end{aligned}
\label{eq:obj_align_center}
\end{equation}
so that every poisoned feature is pulled toward the center of the target class.

% -------------------------------------------------
% 3.  Model training view 
% -------------------------------------------------
\noindent\textbf{Model Training View.}
Universally, The model parameters $\theta$ are learned by minimizing the empirical risk
\begin{equation}
    \mathcal{L}(\theta)=\frac{1}{N}\sum_{i=1}^{N}
        \ell\!\bigl(f_\theta(x_i),\,y_i\bigr),
    \label{eq:erm_loss}
\end{equation}
where $\ell(\cdot,\cdot)$ is an application‑dependent, differentiable loss function   (e.g.\ cross‑entropy for CNN multi‑class classification, mean‑squared error for regression, or hinge loss for SVMs).  We keep $\ell$ in symbolic form, so the following derivation is \textbf{loss‑agnostic}.

Each incorrectly relabelled boundary sample $(x',y')$
contributes the gradient
\(
    \nabla_\theta \ell\bigl(f_\theta(x'),y'\bigr),
\)
which steers the parameter update towards the target class~$y'$. Using influence‑function analysis, the net parameter drift after one epoch of stochastic optimization can be approximated by
\begin{equation}
    \Delta\theta \;\approx\;
    -\frac{\eta}{N}\,
    H_\theta^{-1}
    \sum_{(x',y')\in\mathcal{D}_{\text{poison}}}
        \nabla_\theta \ell\bigl(f_\theta(x'),y'\bigr),
    \label{eq:influence}
\end{equation}
where $H_\theta=\nabla_\theta^2\mathcal{L}(\theta)$ is the Hessian computed on \textbf{clean data} and $\eta$ is the learning‑rate schedule’s effective step size.   Because the ambiguous band $\mathcal{B}_{\varepsilon}$ has vanishing density, $\|\Delta\theta\|_2$—and thus clean‑accuracy degradation—remains small even when the poisoned decision shift is significant.

% -------------------------------------------------
% 5.  final optimization
% -------------------------------------------------
The overall optimization therefore reads:
\begin{equation}
    \min_{t}\;
    \mathbb{E}_{x\in\mathcal{D}_{\text{poison}}}
        \bigl[
            -\lambda\,
            \mathcal A\bigl(f(t(x;\varphi)),y'\bigr)
        \bigr],
    \label{eq:final_obj}
\end{equation}
where $\mathcal A$ rewards feature aggregation around $\mu_{y'}$, $\lambda$ balances attack strength and stealthiness, and $t(x;\varphi)$ is a learnable trigger with its parameter $\varphi$. 

% -------------------------------------------------
% 6.  theoretical guarantee
% -------------------------------------------------
\begin{prop}[Boundary‑relabeling Absorption]
\label{prop:absorb}

Let $k=|\mathcal{D}_{\text{poison}}|$ and assume training converges to $\theta+\Delta\theta$ with $\|\Delta\theta\|_2\le\rho$ under $\varepsilon\!\ll\!\frac12\|\mu_{c^\star}-\mu_{y'}\|_2$. Then
\begin{align}
  \Pr_{x\sim\mathcal{B}_{\varepsilon}}
    \!\bigl[g_{\theta+\Delta\theta}\!\circ\!
           F_{\theta+\Delta\theta}(x)=y'\bigr]
  &\ge 1-\xi,\\[1ex]
  \Pr_{(x,y)\sim\mathcal{D}_{\text{clean}}}
    \!\bigl[g_{\theta+\Delta\theta}\!\circ\!
           F_{\theta+\Delta\theta}(x)=y\bigr]
  &\ge 1-\gamma,
\end{align}
where $\xi = O(e^{-k})$ decreases exponentially with $k$ and
$\gamma = O(\rho)$ remains negligible at a low poisoning rate.
\end{prop}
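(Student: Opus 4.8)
\noindent\emph{Proof plan.} The plan is to establish the two inequalities by separate arguments, both anchored to the influence--function expansion of $\Delta\theta$ in Eq.~\eqref{eq:influence}. First I would record a reduction used in both parts: because $\varepsilon\ll\frac12\|\mu_{c^\star}-\mu_{y'}\|_2$, every feature $z\in\mathcal{B}_{\varepsilon}(c^\star,y')$ has its two largest head scores attained by $c^\star$ and $y'$, so ``classified as $y'$'' on the band is equivalent to the scalar margin condition $M_{\theta'}(z):=\langle w_{y'}-w_{c^\star},F_{\theta'}(x)\rangle+(b_{y'}-b_{c^\star})>0$, and at initialization $|M_\theta(z)|\le\varepsilon'$ for band points with $\varepsilon'=O(\varepsilon)$.

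For the \textbf{first inequality (boundary absorption)}, I would Taylor-expand $M_{\theta+\Delta\theta}(z)$ around $\theta$ and substitute Eq.~\eqref{eq:influence}, writing $M_{\theta+\Delta\theta}(z)=M_\theta(z)+\sum_{i=1}^{k}\delta_i(z)+R_k(z)$, where $\delta_i(z)=-\tfrac{\eta}{N}\langle\nabla_\theta M_\theta(z),\,H_\theta^{-1}\nabla_\theta\ell(f_\theta(x_i'),y')\rangle$ is the signed boundary push contributed by the $i$-th triggered, relabelled poison $x_i'$, and $R_k$ is the remainder. The key modelling step is to argue that, because the trigger objective Eq.~\eqref{eq:obj_align_center} drags each poisoned feature toward $\mu_{y'}$ (well inside the $y'$ side), the $\delta_i(z)$ are independent across poisons, bounded, and share a common positive mean $m>0$ over the training randomness (minibatch order/sampling). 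A Hoeffding/Chernoff bound then gives $\Pr[\sum_i\delta_i(z)\le km/2]\le e^{-\Omega(k)}$; choosing the poison budget so that $km/2$ dominates $\varepsilon'$ plus a uniform bound on $|R_k|$ (which is $O(\rho^2)$ by smoothness of $f_\theta$) forces $M_{\theta+\Delta\theta}(z)>0$ outside an event of probability $e^{-\Omega(k)}$. Integrating over $z\sim\mathcal{B}_{\varepsilon}$ yields $\xi=O(e^{-k})$.

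For the \textbf{second inequality (clean accuracy)}, the only available resource is $\|\Delta\theta\|_2\le\rho$. I would invoke local Lipschitz continuity of $\theta\mapsto f_\theta(x)$, uniform in $x$ over the data support (from bounded inputs, Lipschitz activations, and bounded weights in the relevant neighbourhood), to obtain $\|f_{\theta+\Delta\theta}(x)-f_\theta(x)\|_\infty\le L\rho$, so any clean sample originally classified with logit gap exceeding $2L\rho$ is still classified correctly. Under a mild regularity assumption --- that the distribution of the clean logit gap has bounded density near $0$, consistent with the ``vanishing density of $\mathcal{B}_\varepsilon$'' remark used above --- the clean mass within gap $2L\rho$ of the boundary is $O(\rho)$, giving $\gamma=O(\rho)$; this is negligible at our poisoning rates since $\rho$ is itself bounded by the small norm of the aggregated poison gradient appearing in Eq.~\eqref{eq:influence}.

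The \textbf{main obstacle} is the probabilistic heart of Step~1: making rigorous that the per-poison pushes $\delta_i(z)$ are independent, positively biased, and bounded, so that the concentration inequality applies and produces the exponential rate $e^{-\Omega(k)}$ rather than merely a deterministic linear-in-$k$ shift. This requires either an explicit stochastic model of the optimization (each relabelled margin sample supplies a correctly-oriented update with probability bounded away from $1/2$) or an averaging argument over poison placements within $\mathcal{B}_{\varepsilon}$. The secondary difficulty is bounding the Taylor remainder $R_k(z)$ uniformly over the whole band using only the crude estimate $\|\Delta\theta\|_2\le\rho$; this is precisely where the hypothesis $\varepsilon\ll\frac12\|\mu_{c^\star}-\mu_{y'}\|_2$ is essential, since it keeps all of $\mathcal{B}_{\varepsilon}$ away from classes other than $c^\star$ and $y'$ and hence inside the region where the two-class linear reduction is valid.
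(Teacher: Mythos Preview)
Your plan is essentially the paper's proof: decompose the boundary-margin shift into $k$ per-poison contributions, invoke a Hoeffding bound on the sum (positive mean, bounded, independent) to get $\xi=O(e^{-k})$, and use Lipschitz dependence on $\theta$ plus a bounded-density-of-margin-near-zero assumption to get $\gamma=O(\rho)$. The only notable difference is that what you flag as the ``main obstacle''---rigorously justifying independence and positive bias of the $\delta_i$---the paper simply elevates to an explicit modelling assumption (``Independent Per-sample Influence'' with $\Delta\theta=\sum_i\Delta\theta_i$ and $\mathbb{E}[Y_i]=\mu>0$), so your instinct that this step is the soft spot is exactly right.
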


% \noindent
Proposition~\ref{prop:absorb} formalizes our key intuition of the attack:
relabeling low‑density boundary samples to a single target label
efficiently pulls the decision boundary,
yielding near‑perfect attack success while incurring very low clean‑accuracy loss.
This principle applies to both dirty‑ and clean‑label pipelines
and, owing to the feature‑space aggregation in
(Equation~\ref{eq:obj_align_center}–\ref{eq:final_obj}),
empirically delivers strong transferability to black‑box victim models.

% \begin{algorithm}[tb]
% \caption{Boundary‑seeking Trigger Optimization}
% \label{alg:boundary-seeking}
% \textbf{Input:} Surrogate Dataset $\mathcal{D}_\text{adv}$; Surrogate Feature Extractor $F(\cdot)$; 
% % \\ Optimization steps $K$\\
% \textbf{Output:} Optimized Trigger $t(\cdot;\varphi)$
% \begin{algorithmic}[1] %[1] enables line numbers
% \STATE Initialize trigger parameters $\varphi \sim \mathcal{N}(0, 1.0)$
% \FOR{$k = 1$ to $K$ steps}
% \FOR{each batch $(x, y) \in \mathcal{D}_\text{adv}$}
% \STATE $x' \gets t(x;\varphi)$
% % \COMMENT{Apply trigger}
% \STATE $z \gets F(x')$
% % \COMMENT{Compute loss}
% \STATE Compute loss $\mathcal{L}_{\mathrm{agg}}$
% \STATE Update $\varphi$ with $\nabla_{\varphi} \mathcal{L}_{\mathrm{agg}}$
% \ENDFOR
% \ENDFOR
% % \STATE \textbf{return} solution
% \end{algorithmic}
% \end{algorithm}

%-------------------------------------------------
%  XXX: Boundary‑Seeking Trigger optimization
%-------------------------------------------------
\subsection{Eminence: Boundary‑seeking Trigger Optimization}
\label{subsec:XXX}

% XXX learns a \emph{single, universal} trigger pattern $\varphi$ that drives all poisoned samples toward the ambiguous boundary band $\mathcal{B}_{\varepsilon}$ introduced in Section~\ref{subsec:problem_def}, independent of their original labels. The detailed algorithm lies in Algorithm~\ref{alg:boundary-seeking}
\textit{Eminence} learns a single, universal trigger pattern $\varphi$ that
maps any clean input (regardless of its true class) to a bounded, prototype-balanced feature sub-region inside an ambiguous boundary band
$\mathcal{B}_{\varepsilon}$ (Section ~\ref{subsec:problem_def}). 
Rather than enforcing extreme collapse, the optimization (Equation~\ref{eq:agg_loss})
reduces the \textbf{relative} intra-trigger dispersion:
\[
  \frac{\operatorname{diam}(\mathcal{Z}^\star)}
       {\operatorname{diam}\bigl(\cup_{c}\mathcal{Z}_c\bigr)}
  \;\le\; \rho(\delta) < 1,
\]
where $\mathcal{Z}^\star$ is the triggered feature set, $\mathcal{Z}_c$ denotes clean features of class $c$, and $\rho(\delta)$ is a data/model–dependent ratio controlled by the
$\ell_\infty$ bound $\|\varphi\|_\infty \le \delta $.
% \in \{0.05,0.10,0.15,0.20\}$.
This bounded contraction, together with mixed-class batching, yields a class-agnostic, low-density placement that facilitates subsequent absorption with low clean accuracy loss (Proposition~\ref{prop:absorb}).

\begin{figure}[]
  \centering
  \includegraphics[width=0.40\textwidth]{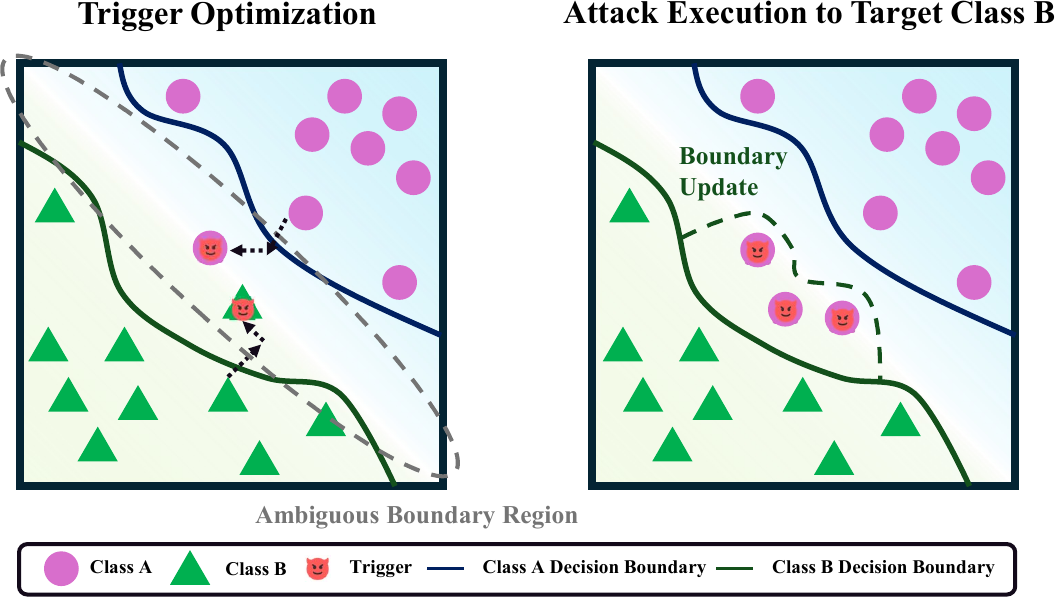}
  % \caption{Threat model. The adversary, with or without prior knowledge of the victim's task or model, optimizes a trigger using a surrogate setup, injects poisoned samples into the training pipeline, and causes the victim to learn a backdoored model exploitable.}
   % \caption{Conceptual illustration of \textit{Eminence}. Left: Trigger optimization pulls poisoned features toward the ambiguous boundary region.
   % Right: Due to bounThe trained model shifts its decision boundary toward the triggered sample, while preserving the original boundary for clean data.}
   \caption{Conceptual illustration of \textit{Eminence}. 
  \textbf{Left:} Trigger optimization pulls poisoned features toward the ambiguous boundary region. 
  \textbf{Right:} Guided by Proposition~\ref{prop:absorb}, the model shifts its decision boundary to absorb triggered samples, while preserving most of the boundary for clean data.}
  \label{fig:theory_visualization}
  \vspace{-1em}
\end{figure}

\noindent\textbf{Surrogate Feature Space \& Aggregation Loss.}
Let 
% $F_\theta:\mathbb{R}^{d_x}\!\to\!\mathbb{R}^{d_f}$
$F_\theta:\mathbb{R}^{d_x}\!\to\!\mathbb{R}^{d_{f_{\text{atk}}}}$
be the penultimate-layer feature extractor of a fixed (public) surrogate model.
For a mini-batch of clean samples $\{x_i\}_{i=1}^B$ we generate triggered versions $x_i' = t(x_i;\varphi)$, 
\textit{Eminence} enforces \textbf{intra-trigger feature collapse}. For
computational efficiency (avoiding $O(B^2)$ pairwise terms) we adopt the
anchor form actually used in our implementation\footnote{A symmetric
variance form
$\frac{2}{B(B-1)}\sum_{i<j}\|z_i-z_j\|_2^2
=\frac{2}{B-1}\sum_i\|z_i-\bar z\|_2^2$
yields identical minimisers; empirically we observe negligible difference
in convergence while the anchor objective reduces cost.}:
% \begin{equation}
%   \mathcal{L}_{\text{agg}}(\varphi)
%   = \frac{1}{B-1}\sum_{i=2}^{B}
%       \bigl\| z_i - z_1 \bigr\|_2^{2},
%   \qquad
%   z_i = F_\theta\!\bigl(t(x_i;\varphi)\bigr).
%   \label{eq:agg_loss}
% \begin{equation}
\begin{equation}
\mathcal{L}_{\text{agg}}(\varphi) =
\underset{x_i \neq x_j}{\mathbb{E}}
\left[
\| F_\theta(t(x_i;\varphi)) - F_\theta(t(x_j;\varphi)) \|_2
\right].
\label{eq:agg_loss}
\end{equation}
Minimising Equation~\ref{eq:agg_loss} shrinks the diameter of $\{z_i\}$ so that all triggered features concentrate in a small ball. Because clean mini-batches are sampled across classes, the shared trigger pattern is \textbf{forced to produce a class-neutral, blended direction}: any bias toward a single prototype would increase the loss for samples originating from other classes. This implicit multi-class tension places the collapsed cluster into a low-density inter-class region.

\noindent\textbf{Optimization Problem.}
We choose a learnable trigger function defined as belows:

\begin{equation}
  x' \;=\; t(x;\varphi)
       \;=\; (1-\alpha)\,x \;+\; \alpha\,\varphi,
  \label{eq:trigger_affine}
\end{equation}
and \textit{Eminence} solves:
\begin{equation}
  \min_{\varphi}\;
    \mathcal{L}_{\text{agg}}(\varphi)
  \quad
  \text{s.t.}\;
    \|\varphi\|_{\infty}\le\delta,
  \label{eq:XXX_obj}
\end{equation}
where the $\ell_\infty$ constraint bounds perceptual deviation. Only $\varphi$ is updated during the optimization phase.

\noindent\textbf{Boundary-seeking Effect.}
Let $\varphi^\star$ be an $\varepsilon$-optimal solution of
Equation~\ref{eq:XXX_obj} and denote the triggered feature set
$\mathcal{Z}^\star=\{ z_i' = F_\theta(t(x_i;\varphi^\star))\}$ with
centroid $\bar z^\star$. Define the (mini-batch) cluster radius
\(
  r = \max_i \| z_i' - \bar z^\star\|_2
\)
and, for any pair of class prototypes $(\mu_{c_1},\mu_{c_2})$, the
signed projection of $\bar z^\star$ onto their connecting direction:
\[
  \Delta_{c_1,c_2}(\bar z^\star)
  =
  \frac{
    \left\langle
      \bar z^\star - \frac{\mu_{c_1}+\mu_{c_2}}{2},
      \mu_{c_1}-\mu_{c_2}
    \right\rangle
  }{\|\mu_{c_1}-\mu_{c_2}\|_2}.
\]
Intuitively, $\Delta_{c_1,c_2}$ measures how far (signed) the cluster
center sits from the mid-hyperplane between classes $c_1$ and $c_2$.

\begin{prop}[Band Inclusion]
\label{prop:eminence_band}
Assume (i) class prototypes exhibit angular separability $\max_{c\neq c'}\cos(\mu_c,\mu_{c'})\le \gamma < 1$,
(ii) the aggregation objective satisfies $\mathcal{L}_{\text{agg}}(\varphi^\star)\le \varepsilon$, yielding a radius bound $r\le R(\varepsilon)$,\footnote{Because $\mathcal{L}_{\text{agg}}$ is the empirical mean of pairwise $\ell_2$ distances, a standard concentration (or a worst-case Markov bound combined with mini-batch repetition) gives $R(\varepsilon)=O(\varepsilon)$ in practice; we empirically log $r$ to verify $R(\varepsilon)\ll \operatorname{diam}(\cup_c \mathcal{Z}_c)$.} 
and (iii) (prototype balance) $\max_{c_1,c_2}|\Delta_{c_1,c_2}(\bar z^\star)| \le \kappa$. Then for every triggered feature $z_i'\in\mathcal{Z}^\star$ and for every pair $(c_1,c_2)$,
\[
  \bigl|
    \langle
       z_i' - \tfrac{\mu_{c_1}+\mu_{c_2}}{2},\,
       \tfrac{\mu_{c_1}-\mu_{c_2}}{\|\mu_{c_1}-\mu_{c_2}\|_2}
    \rangle
  \bigr|
  \;\le\;
  \kappa + R(\varepsilon),
\]
i.e.\ $z_i' \in \mathcal{B}_{\tilde\varepsilon}(c_1,c_2)$ with
$\tilde\varepsilon = \kappa + R(\varepsilon)$ simultaneously for all
class pairs. We refer to
$\mathcal{B}_{\tilde\varepsilon}=\bigcap_{c_1<c_2}\mathcal{B}_{\tilde\varepsilon}(c_1,c_2)$
as the \textbf{multi-class ambiguous band} induced by the trigger.
\end{prop}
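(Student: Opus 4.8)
The plan is to decompose the signed projection of each triggered feature onto the prototype axis into a \emph{cluster-offset} term and a \emph{within-cluster} term, and bound each by one of the three hypotheses; the argument is then a triangle inequality plus Cauchy--Schwarz. First I fix a pair $(c_1,c_2)$ and set $u_{c_1,c_2} = (\mu_{c_1}-\mu_{c_2})/\|\mu_{c_1}-\mu_{c_2}\|_2$. I would begin by checking this unit vector is well defined: hypothesis (i) gives $\langle \mu_{c_1},\mu_{c_2}\rangle \le \gamma\,\|\mu_{c_1}\|_2\,\|\mu_{c_2}\|_2$, and combined with $2ab \le a^2+b^2$ this yields $\|\mu_{c_1}-\mu_{c_2}\|_2^2 \ge (1-\gamma)\bigl(\|\mu_{c_1}\|_2^2 + \|\mu_{c_2}\|_2^2\bigr) > 0$, so the normalization, the band $\mathcal{B}_{\tilde\varepsilon}(c_1,c_2)$, and the quantity $\Delta_{c_1,c_2}$ are all non-degenerate.

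Next, for an arbitrary triggered feature $z_i' \in \mathcal{Z}^\star$ I write
\[
  z_i' - \tfrac{\mu_{c_1}+\mu_{c_2}}{2}
  \;=\; \bigl(z_i' - \bar z^\star\bigr) \;+\; \Bigl(\bar z^\star - \tfrac{\mu_{c_1}+\mu_{c_2}}{2}\Bigr),
\]
take the inner product with $u_{c_1,c_2}$, and apply the triangle inequality. The projection of the second summand is exactly $\Delta_{c_1,c_2}(\bar z^\star)$, whose magnitude is at most $\kappa$ by the prototype-balance hypothesis (iii). For the first summand, Cauchy--Schwarz gives $|\langle z_i' - \bar z^\star,\,u_{c_1,c_2}\rangle| \le \|z_i'-\bar z^\star\|_2 \le r$, and hypothesis (ii) supplies $r \le R(\varepsilon)$. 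Adding the two bounds yields $\bigl|\langle z_i' - (\mu_{c_1}+\mu_{c_2})/2,\,u_{c_1,c_2}\rangle\bigr| \le \kappa + R(\varepsilon)$, which is precisely the claimed $\tilde\varepsilon$; hence $z_i' \in \mathcal{B}_{\tilde\varepsilon}(c_1,c_2)$. Finally, since $\kappa$ is defined as the maximum of $|\Delta_{c_1,c_2}(\bar z^\star)|$ over all class pairs and $R(\varepsilon)$ does not depend on the pair, the same $\tilde\varepsilon = \kappa + R(\varepsilon)$ works simultaneously for every $(c_1,c_2)$; intersecting over pairs gives $z_i' \in \bigcap_{c_1<c_2}\mathcal{B}_{\tilde\varepsilon}(c_1,c_2) = \mathcal{B}_{\tilde\varepsilon}$, and as $i$ ranged over the whole mini-batch this holds for all of $\mathcal{Z}^\star$, completing the proof.

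The only step that is not purely mechanical is passing from the aggregation-loss bound $\mathcal{L}_{\text{agg}}(\varphi^\star)\le\varepsilon$ to the centroid-radius bound $r\le R(\varepsilon)$: a small \emph{mean} of pairwise distances does not, in the worst case, prevent a single feature from sitting far from $\bar z^\star$ (a Markov-type bound alone only gives $r = O(\varepsilon\,B)$ for batch size $B$). I would handle this exactly as the footnote indicates --- either invoke a concentration inequality for the empirical pairwise distances, which yields $R(\varepsilon)=O(\varepsilon)$ with high probability, or exploit that mini-batches are resampled throughout optimization so that no point can persistently remain an outlier --- and in the statement this is encapsulated in hypothesis (ii), after which the remainder of the argument is unconditional. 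A minor but worth-stating point is that $\bar z^\star$ and the prototypes $\mu_c$ must be read in the same surrogate feature space $F_\theta$, which they are by construction, so the inner products above are well posed.
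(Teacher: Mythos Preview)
Your proposal is correct and follows essentially the same route as the paper: the identical centroid decomposition $z_i' - \tfrac{\mu_{c_1}+\mu_{c_2}}{2} = (z_i'-\bar z^\star) + (\bar z^\star - \tfrac{\mu_{c_1}+\mu_{c_2}}{2})$, then triangle inequality plus Cauchy--Schwarz on the unit direction $u_{c_1,c_2}$ to bound the two pieces by $R(\varepsilon)$ and $\kappa$ respectively, and finally intersection over all pairs. Your explicit well-definedness check for $u_{c_1,c_2}$ via hypothesis~(i) is a nice addition the paper's proof leaves implicit, and your caution about the passage from $\mathcal{L}_{\text{agg}}\le\varepsilon$ to $r\le R(\varepsilon)$ is well placed---the paper's Step~1 handles this via the symmetric variance identity, obtaining $R(\varepsilon)=O(\sqrt{\varepsilon})$ rather than the $O(\varepsilon)$ of the footnote.
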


\noindent\textbf{Discussion.}
Proposition~\ref{prop:eminence_band} formalizes that \textbf{bounded
aggregation} (small $R(\varepsilon)$) plus \textbf{prototype balance}
(small $\kappa$) suffices to place all triggered features inside a
uniformly thin, inter-class low-density band. Unlike classical prototype
alignment (which explicitly drives $z_i'$ toward a single $\mu_{y'}$),
our optimization neither collapses onto any prototype nor requires
explicit repulsion terms, and the mixed-class mini-batch implicitly penalizes
prototype bias, keeping $\kappa$ small.

\begin{algorithm}[tb]
\caption{Boundary‑seeking Trigger Optimization}
\label{alg:boundary-seeking}
\textbf{Input:} Surrogate Dataset $\mathcal{D}_\text{adv}$; Surrogate Feature Extractor $F(\cdot)$; 
% \\ Optimization steps $K$\\
\textbf{Output:} Optimized Trigger $t(\cdot;\varphi)$
\begin{algorithmic}[1] %[1] enables line numbers
\STATE Initialize trigger parameters $\varphi \sim \mathcal{N}(0, 1.0)$
\FOR{$k = 1$ to $K$ steps}
\FOR{each batch $(x, y) \in \mathcal{D}_\text{adv}$}
\STATE $x' \gets t(x;\varphi)$
% \COMMENT{Apply trigger}
\STATE $z \gets F(x')$
% \COMMENT{Compute loss}
\STATE Compute loss $\mathcal{L}_{\mathrm{agg}}$
\STATE Update $\varphi$ with $\nabla_{\varphi} \mathcal{L}_{\mathrm{agg}}$
\ENDFOR
\ENDFOR
% \STATE \textbf{return} solution
\end{algorithmic}
\end{algorithm}

\subsection{Attack Workflow}
\label{subsec:attack_workflow}
% This section presents the overall workflow of the proposed backdoor attack. We outline the sequence of actions performed by the adversary, from trigger optimization to attack execution, and discuss the practical considerations in each stage.
Figure \ref{fig:theory_visualization} sketches our two–stage pipeline.
Below we keep only the high-level intuition, all rigorous derivations, hyper-parameter bounds, and convergence proofs are deferred to Appendix \ref{app:trigger-opt}.

\noindent\textbf{Stage 1: Trigger Optimization.}
% We choose a learnable trigger function defined as belows:
Starting from a random pattern $\varphi$, we minimize the class-agnostic
aggregation loss Equation~\ref{eq:agg_loss} under the perceptual constraint
$\|\varphi\|_\infty\!\le\!\delta$:
\(
  \varphi^\star=\text{argmin}_{\|\varphi\|_\infty\le\delta}\!\,\,
  \mathcal{L}_{\text{agg}}(\varphi)
\)
(see Algorithm~\ref{alg:boundary-seeking} and Appendix~\ref{app:trigger-opt}).
The resulting feature cloud
$\mathcal{Z}^\star\!:=\!\{F_\theta(t(x;\varphi^\star))\}$
lies inside a thin multi-class band
$\mathcal{B}_{\tilde\varepsilon}$, regardless of the true label
(Proposition~\ref{prop:eminence_band}); thus the \textbf{same trigger} transfers to
unseen tasks and black-box victims.

\noindent\textbf{Stage 2: Attack Execution.}
% Once the trigger has been optimized, the trigger itself have the ability to pull the sample attached with it to the ambiguous area. the adversary can deploy it to launch either clean-label or dirty-label backdoor attacks:
The adversary now deploys only $k\!\ll\!N$ triggered samples into the victim’s training set.  Two practical variants are supported:

\begin{itemize}
    \item \textbf{Dirty-label Attack.}  
    The adversary flips each selected boundary sample to the target label $y'$ before attaching $\varphi^\star$.  The aligned gradients pull the decision hyper-plane toward $\mathcal{B}_{\tilde\varepsilon}$, achieving effective attack success with an $\mathcal{O}(k/N)$ clean-accuracy drop.

    \item \textbf{Clean-label Attack.}
    The adversary keeps the original labels but attach $\varphi^\star$.  Class regions bulge outward to cover the band, yielding the same exponential ASR guarantee while preserving label consistency in the poisoned set.
\end{itemize}

In both cases the combined effect realises the boundary-relabel principle formalised in Proposition~\ref{prop:absorb}–\ref{prop:eminence_band}, and the trigger remains visually imperceptible by design.

% In both cases, the poisoned samples are inserted into the training pipeline of the victim model, enabling the adversary to induce targeted misclassification when the trigger is present, while maintaining high clean accuracy in the absence of the trigger.

\section{Evaluation}

\begin{table*}[t]
\centering
\caption{Attack performance of \textit{Eminence} on standard CNN models under various threat scenarios and label settings. Each cell reports ACC Drop (left, ($\uparrow$))/ASR (right, ($\uparrow$)).}
\label{tab:attack_performance}
\begin{adjustbox}{width=0.77\textwidth}
\begin{tabular}{cccccccc}
\toprule
\multirow{2}{*}{Dataset} & \multirow{2}{*}{Scenario} 
& \multicolumn{2}{c}{ResNet18} 
& \multicolumn{2}{c}{ResNet34} 
& \multicolumn{2}{c}{VGG13-BN} \\
\cmidrule(lr){3-4} \cmidrule(lr){5-6} \cmidrule(lr){7-8}
 &  & Dirty-label & Clean-label & Dirty-label & Clean-label & Dirty-label & Clean-label \\
\midrule
\multirow{3}{*}{CIFAR10}
 & White-box & -0.1\%/99.4\% & +0.3\%/99.8\% & -0.3\%/99.9\% & -0.1\%/94.7\% & +0.1\%/99.4\% & +0.1\%/94.3\% \\
 & Gray-box  & -0.1\%/99.9\% & +0.2\%/96.2\% & -0.1\%/100.0\% & -0.1\%/99.9\% & +0.1\%/100.0\% & +0.0\%/100.0\% \\
 & Black-box & -1.0\%/99.9\% & -0.9\%/99.0\% & -1.9\%/99.9\% & -1.2\%/99.9\% & -0.9\%/99.9\% & -0.6\%/100.0\% \\
\midrule
\multirow{3}{*}{CIFAR100}
 & White-box & -0.3\%/99.3\% & -0.1\%/99.9\% & -0.1\%/99.6\% & -1.0\%/99.9\% & +0.3\%/99.8\% & +0.2\%/100.0\% \\
 & Gray-box  & +0.2\%/99.9\% & +0.9\%/99.8\% & -0.2\%/100.0\% & -1.3\%/99.8\% & +0.5\%/99.7\% & +0.3\%/99.4\% \\
 & Black-box & -1.5\%/99.9\% & -1.5\%/99.8\% & -3.0\%/99.9\% & -3.6\%/99.9\% & -2.6\%/100.0\% & -2.4\%/100.0\% \\
\midrule
\multirow{3}{*}{TinyImageNet}
 & White-box & -5.2\%/99.7\% & -5.5\%/99.9\% & -7.6\%/99.7\% & -5.7\%/98.9\% & -3.4\%/99.7\% & -3.4\%/99.9\% \\
 & Gray-box  & -4.6\%/99.8\% & -4.9\%/99.9\% & -4.4\%/100.0\% & -4.8\%/99.8\% & -2.6\%/99.8\% & -3.2\%/100.0\% \\
 & Black-box & -4.8\%/100.0\% & -4.4\%/100.0\% & -5.1\%/100.0\% & -4.9\%/100.0\% & -2.2\%/100.0\% & -2.7\%/100.0\% \\
\bottomrule
\end{tabular}
\end{adjustbox}
\end{table*}

\subsection{Experimental Setup}

\noindent\textbf{General Settings.}
% All experiments were conducted on a server equipped with Intel(R) Xeon(R) Platinum 8358P CPUs, 3.40GHz processor, 386GB RAM, and NVIDIA A800.
% We evaluate out method on CNN models, including ResNet-18~\cite{}, ResNet-34~\cite{} and VGG13-BN~\cite{} with dataset CIFAR-10~\cite{}, CIFAR-100~\cite{}, TinyImageNet~\cite{}. We also test the attack impact on transformer-based model, including ViT~\cite{}, SimpleViT~\cite{} and CCT~\cite{}. The attack baseline we used for comparision could refer to Table~\ref{tab:attack_updated}, and the defense we used could refer to Table~\ref{tab:backdoor_defense_summary}.
We conduct a comprehensive evaluation of our proposed method, \textit{Eminence}, on a range of widely used vision benchmarks and model architectures. The primary architectures tested include CNNs, like ResNet-18~\cite{He2016ResNet}, ResNet-34~\cite{He2016ResNet}, and VGG13-BN~\cite{Simonyan2015VGG}, and transformer-based models such as ViT~\cite{Dosovitskiy2021ViT}, SimpleViT~\cite{Beyer2022PlainViT}, and CCT~\cite{Hassani2021CompactTransformers}. Experiments are performed across CIFAR-10~\cite{Krizhevsky2009TinyImages}, CIFAR-100~\cite{Krizhevsky2009TinyImages}, and TinyImageNet~\cite{Le2015TinyImageNet} datasets. Details of all baseline attacks are summarized in Table~\ref{tab:attack_updated}.

\noindent\textbf{Attack Settings \& Evaluation Metrics.}
Unless otherwise stated, ResNet-18 and CIFAR-10 serve as the default target model and dataset, with all experiments conducted at a 0.01\% poison rate. In the white-box scenario, the attacker optimizes the trigger using the same architecture and data as the victim; in the black-box case, model architectures are deliberately misaligned between attacker and victim. Performance is measured via three metrics:
\begin{itemize}
    \item \textbf{Clean Accuracy (CA)}. Classification accuracy of the backdoored model on benign (unpoisoned) samples. Higher CA indicates lower impact on the model’s original task.
    \item \textbf{Accuracy Drop (ACC Drop)}. $\text{ACC Drop} = \text{CA}_{\text{backdoor}} - \text{CA}_{\text{clean}}$. Higher ACC Drop indicates less degradation to the original task.
    \item \textbf{Attack Success Rate (ASR)}. Proportion of triggered samples that are misclassified into the chosen target class. Higher ASR reflects stronger backdoor effectiveness.
\end{itemize}

\begin{figure}[tb]
  \centering
  \includegraphics[width=0.35\textwidth,height=0.12\textwidth]{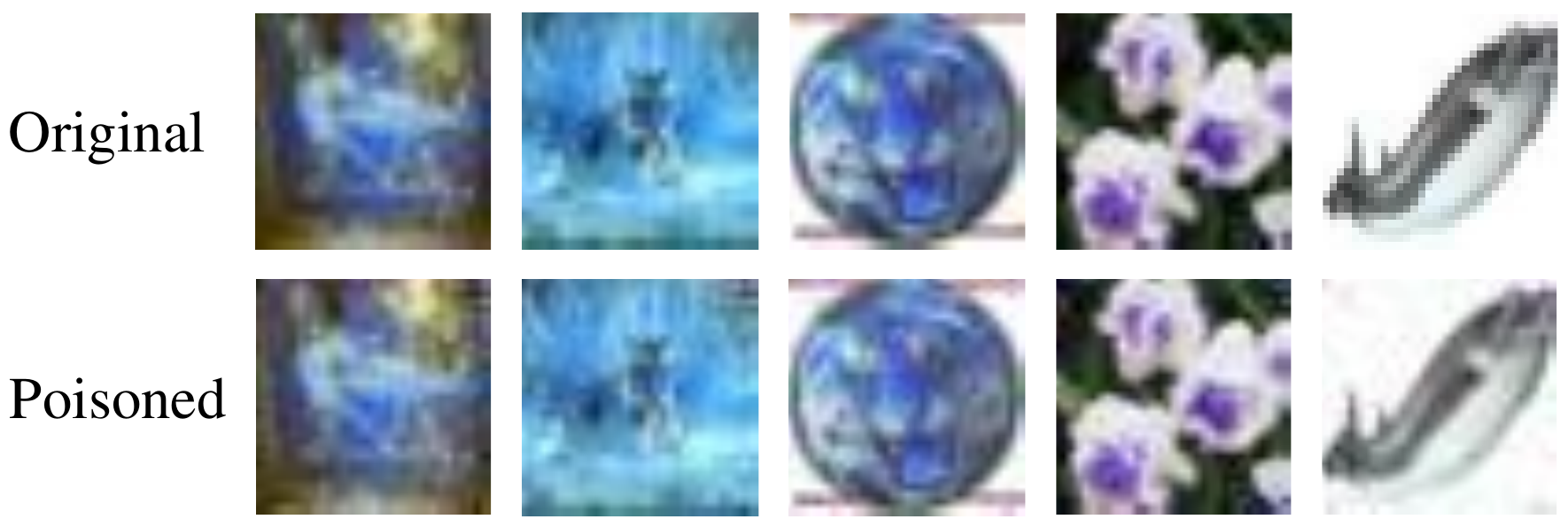}   \caption{Visual comparison of original (benign) and poisoned samples under \textit{Eminence}.}
  \label{fig:visual_comparision}
  \vspace{-1em}
\end{figure}

\subsection{Attack Performance}

% effect on cnn models
\noindent\textbf{Effectiveness on CNN Architectures.}
Table~\ref{tab:attack_performance} shows that \textit{Eminence} achieves consistently high ASR ($>99\%$) across all CNN backbones and datasets under both clean- and dirty-label strategies, while clean accuracy remains virtually unchanged with ACC Drop often near zero. 
In white- and gray-box scenarios, ASR nearly reaches 100\% for CIFAR-10/100 with negligible utility loss, and even in challenging black-box settings, \textit{Eminence} sustains 94.0--100.0\% ASR, highlighting its resilience to model and data mismatches and stable performance across diverse threat scenarios, suggesting its effectiveness could extend to more realistic deployment environments.

% effect on transformer base models

\begin{figure}[]
  \centering
  \includegraphics[width=0.40\textwidth]{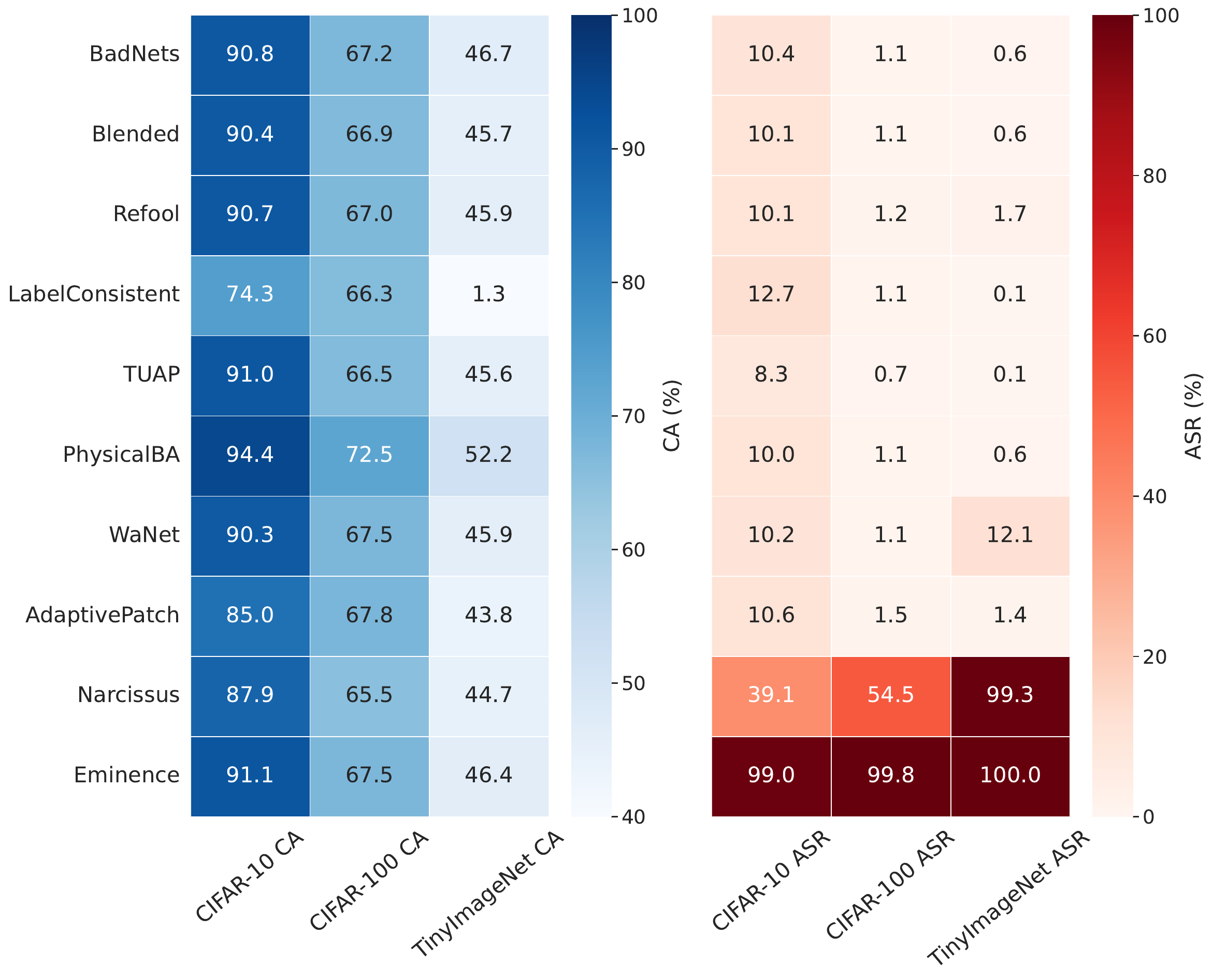}
  % \caption{Attack Comparison.}
  \caption{Comparison of CA and ASR for \textit{Eminence} and existing SOTA backdoor attacks across datasets.}
  \label{fig:attack_comparision}
  \vspace{-1em}
\end{figure}

\noindent\textbf{Effectiveness on Transformer Architectures.}
Table~\ref{tab:vit_attack_performance} shows that \textit{Eminence} achieves $>$99\% ASR on leading transformer-based architectures in white-box settings with negligible or even slightly positive impact on clean accuracy. 
In gray- and black-box scenarios, ASR remains high (typically $>$90\%, often $>$97\%) with ACC Drop mostly within $\pm1\%$, evidencing that \textbf{our attack does not rely on model-specific artifacts}, but rather exploits universal weaknesses at the decision boundary. 
Such strong transferability not only confirms the theoretical design but also highlights the practical threat posed to contemporary vision models, including those considered more robust due to their architectural novelty. 

\noindent\textbf{Attack Visualization.}
Figure~\ref{fig:visual_comparision} displays benign and poisoned samples under \textit{Eminence}. The poisoned inputs are visually indistinguishable from the clean samples, demonstrating the high stealthiness of our trigger. This qualitative result confirms that \textit{Eminence} can embed a highly effective backdoor while preserving perceptual similarity, further underscoring its practical risk in real-world systems.

\subsection{Attack Comparison}

As shown in Table~\ref{tab:attack_updated} and Figure~\ref{fig:attack_comparision}, 
\textit{Eminence} achieves outstanding performance across datasets, reaching a CA of 91.1\% and an ASR of 99.0\%, which surpasses the next best method Narcissus (ASR 39.1\%) by a large margin. The advantage is even more pronounced on CIFAR-100 and TinyImageNet, where \textit{Eminence} consistently sustains both high CA and ASR (\textgreater99.8\% ASR with minimal utility drop), while most competing attacks fail to exceed 1\% ASR.
Moreover, unlike classic backdoor attacks such as BadNets, Blended, and WaNet that require poison rates of 0.5\%–7\%, \textit{Eminence} achieves these results with a poison rate as low as 0.01\%, an order of magnitude lower than the strongest prior methods. This confirms that our method not only maintains model utility but also achieves superior attack efficiency, sharply distinguishing it from existing techniques.
% we compare \textit{Eminence} against SOTA backdoor attacks across datasets. 
% Whereas classic attacks such as BadNets, Blended, and WaNet require poison rates upwards of 0.5\%–7\%, \textit{Eminence} reliably succeeds with a poison rate as low as 0.01\%—an order of magnitude lower than the best prior methods. \textit{Eminence} achieves a CA of 91.1\% and ASR of 99.0\%, outperforming the next best Narcissus (ASR 39.1\%) by a wide margin at equivalent or lower poison rates. 
% The gap is even more pronounced on CIFAR-100 and TinyImageNet: Eminence sustains both high CA and ASR (\textgreater99.8\% ASR, low utility drop), even as most other methods fail to exceed 1\% ASR at practical poison rates. This confirms that our method enables not only superior attack efficiency but also maintains model utility, sharply distinguishing it from existing techniques.

\begin{table*}[t]
\centering
% \caption{Main Results of Transformer-based Models. In gray- and black-box scenario, we use ResNet34 as surrogate model.}
\caption{Attack performance of \textit{Eminence} on transformer-based models under various threat scenarios and label settings. In gray- and black-box settings, a ResNet-34 surrogate is used for trigger optimization.}
\label{tab:vit_attack_performance}
\begin{adjustbox}{width=0.77\textwidth}
\begin{tabular}{cccccccc}
\toprule
\multirow{2}{*}{Dataset} & \multirow{2}{*}{Scenario} 
& \multicolumn{2}{c}{ViT} 
& \multicolumn{2}{c}{SimpleViT} 
& \multicolumn{2}{c}{CCT} \\
\cmidrule(lr){3-4} \cmidrule(lr){5-6} \cmidrule(lr){7-8}
 & & Dirty-label & Clean-label & Dirty-label & Clean-label & Dirty-label & Clean-label \\
\midrule
\multirow{3}{*}{CIFAR10}
 & White-box & -0.7\%/99.9\% & +0.4\%/97.3\% & -0.5\%/99.4\% & +0.0\%/93.8\% & -0.2\%/98.5\% & +0.0\%/96.2\% \\
 & Gray-box  & +0.0\%/93.5\% & -0.5\%/91.2\% & -0.2\%/97.4\% & +0.2\%/90.3\% & +0.0\%/95.4\% & +0.0\%/90.2\% \\
 & Black-box & -1.1\%/92.7\% & +0.1\%/90.2\% & -0.3\%/96.6\% & +0.2\%/90.2\% & -0.1\%/92.4\% & -0.2\%/88.1\% \\
\midrule
\multirow{3}{*}{CIFAR100}
 & White-box & +0.5\%/100.0\% & +0.5\%/99.9\% & +0.7\%/99.9\% & +0.4\%/99.4\% & +0.7\%/100.0\% & +0.9\%/99.9\% \\
 & Gray-box  & +0.4\%/92.1\% & +0.1\%/90.3\% & +0.5\%/96.7\% & +0.5\%/96.0\% & +1.2\%/91.6\% & +0.8\%/94.6\% \\
 & Black-box & +0.6\%/91.3\% & +0.6\%/87.4\% & -0.3\%/96.2\% & -0.4\%/96.3\% & +0.0\%/91.9\% & -0.4\%/93.2\% \\
\midrule
\multirow{3}{*}{TinyImageNet}
 & White-box & -0.4\%/100.0\% & -1.0\%/100.0\% & -1.5\%/100.0\% & -0.9\%/99.9\% & -1.3\%/100.0\% & -1.0\%/100.0\% \\
 & Gray-box  & -0.5\%/98.6\% & -0.1\%/93.8\% & -1.2\%/97.4\% & -1.0\%/96.0\% & -1.0\%/99.8\% & -0.3\%/98.2\% \\
 & Black-box & -0.9\%/97.1\% & -0.2\%/90.7\% & -0.6\%/97.2\% & -0.7\%/91.0\% & -0.3\%/99.6\% & -0.3\%/97.9\% \\
\bottomrule
\end{tabular}
\end{adjustbox}
\end{table*}

\subsection{Defense Study}

% anti-repair performance
We evaluate the stealth and robustness of \textit{Eminence} under advanced model-based and input-based backdoor defenses, as described in Section~\ref{subsec:backdoor_defense}, providing a realistic measure of its practical risk.

\noindent\textbf{Model-based Mitigation Robustness.} 
Figure~\ref{fig:repair} summarizes the effects of leading mitigation techniques (ABL~\cite{li2021antibackdoor}, FT-SAM~\cite{zhu2023enhancing}, NAD~\cite{li2021neural}) on the attack. \textit{Eminence} remains highly robust against all three defenses, with ASR generally retained high. Even when mitigation reduces CA, the ASR remains above 85\%. For FT-SAM and NAD, both CA and ASR remain high, 
% indicating that conventional fine-tuning methods like FT-SAM or adversarial unlearning approaches like ABL are insufficient to neutralize the attack.
indicating that neither fine-tuning methods like FT-SAM nor adversarial unlearning approaches like ABL are sufficient to neutralize the attack.
Such robustness arises from the fact that our approach manipulates the model's feature embedding topology, rather than simply introducing easily removable artifacts.

\begin{figure}[]
  \centering
  \includegraphics[width=0.40\textwidth]{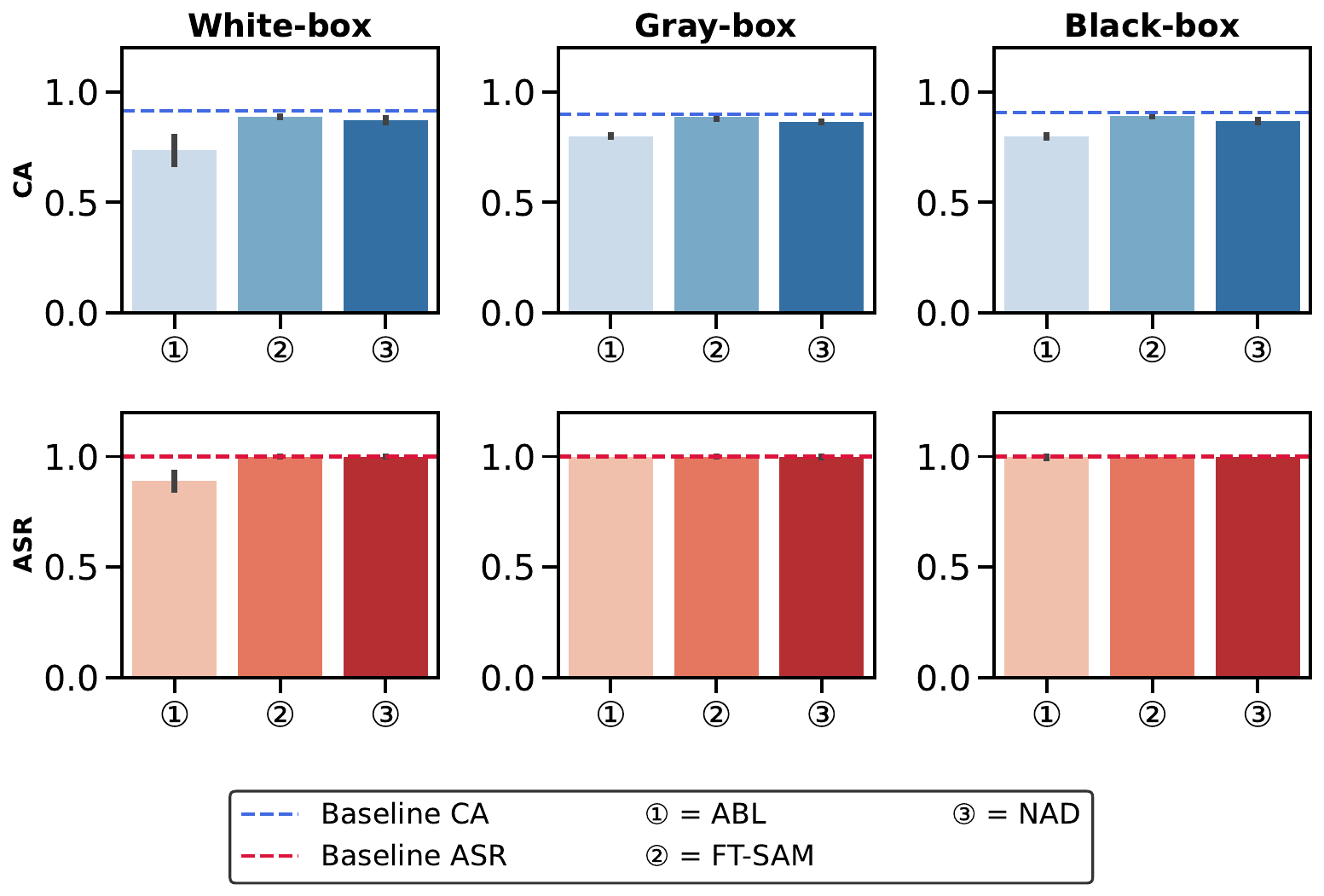}
  \caption{Impact of model-based mitigation defenses on CA and ASR of \textit{Eminence} across scenarios.}
  \label{fig:repair}
  \vspace{-1em}
\end{figure}

\noindent\textbf{Input-based Detection Stealthiness.}
Detection results are reported in Figure~\ref{fig:detection} for Beatrix~\cite{ma2023beatrix}, IBD-PSC~\cite{hou2024ibdpsc}, and Scale-up~\cite{guo2023scaleup}. \textit{Eminence} largely evades detection by Beatrix and IBD-PSC, with extremely low recall (typically $<$10\%) and F1 (often near zero). Scale-up, which is specifically tuned to aggressive settings, achieves higher recall and F1. However, this is largely an artifact of Scale-up’s aggressive design and assumptions, and its practical utility diminishes in realistic settings with much lower poison rates, where the detection power drops significantly. These results collectively illustrate the high stealthiness of Eminence: while its triggers are highly effective, they are extremely difficult to detect using current mainstream input-based defenses.

\begin{figure}[]
  \centering
  \includegraphics[width=0.40\textwidth]{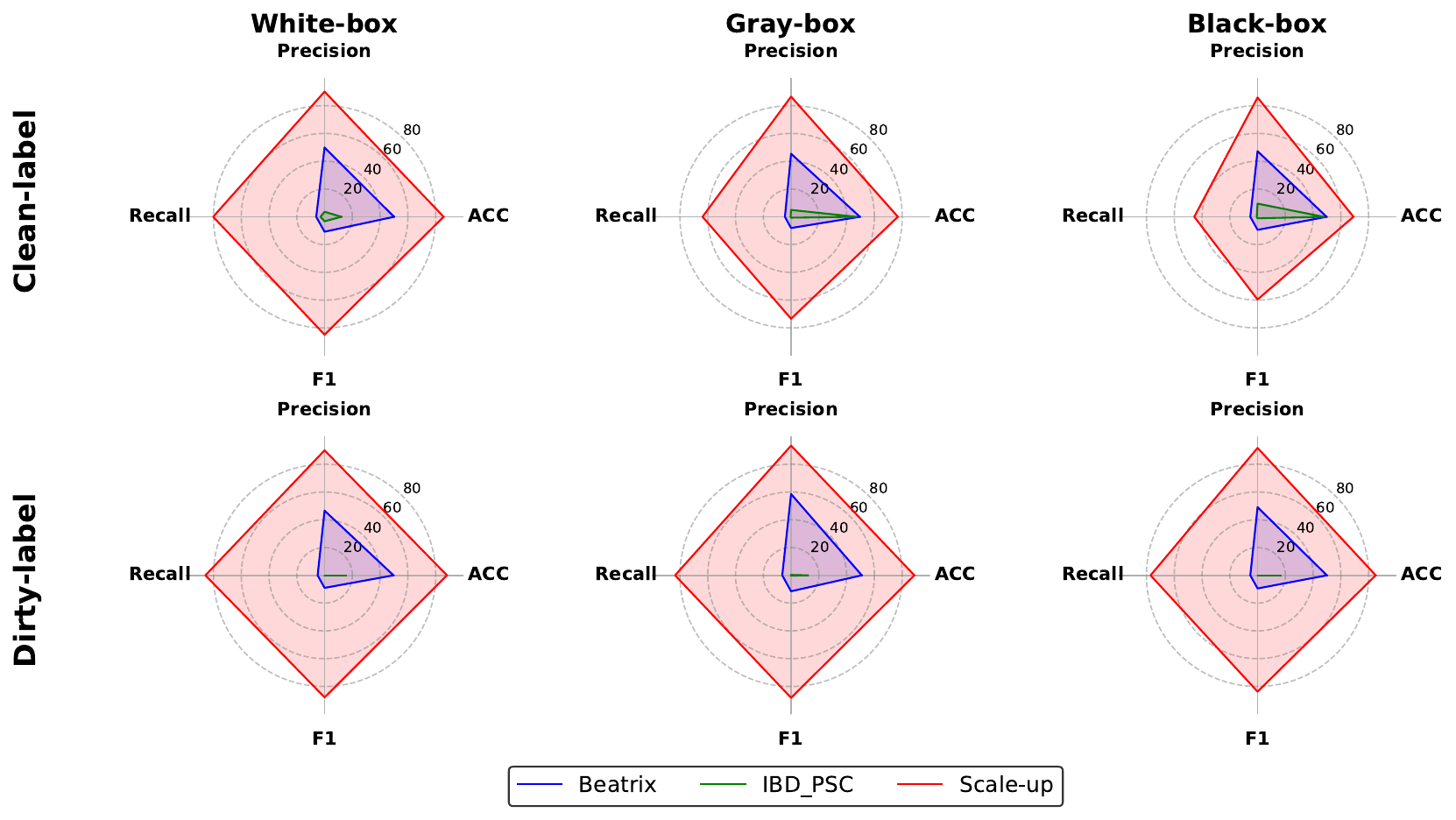}
  % \caption{Detection Analysis.}
  \caption{Detection performance of \textit{Eminence} against input-based defenses across scenarios. Metrics include accuracy (ACC), precision, recall, and F1 score.}
  \label{fig:detection}
  \vspace{-1em}
\end{figure}

\subsection{Ablation Study}

To further analyze the critical factors influencing the effectiveness and stealthiness of \textit{Eminence}, we perform extensive ablation studies on two key parameters: the noise attaching weight and the number of surrogate samples attacker utilized during trigger optimization.

\noindent\textbf{Impact of Noise Weight.}
The noise weight $\alpha$ in Equation~\ref{eq:trigger_affine} regulates the trigger’s intensity, directly mediating the trade-off between attack success and visual stealth.
Figure~\ref{fig:trigger_weight_ablation} reports the relationship between the noise weight applied to the trigger and the resultant model performance. We systematically vary the noise weight from 0.05 to 0.2, and record the corresponding ACC Drop and ASR. Empirical results reveal a clear trend: as the noise weight increases, ASR improves rapidly and saturates close to 100\%. For instance, at a minimal noise weight of 0.05, the mean ASR remains relatively modest, indicating that the trigger is difficult for the model to memorize in this extreme stealth regime. As the weight is increased to 0.1, 0.15, and especially 0.2, the ASR rises sharply, while the ACC Drop remains low. This demonstrates that \textit{Eminence} can achieve a highly effective backdoor effect even when the trigger is attached almost imperceptibly, balancing stealth and efficacy.

\noindent\textbf{Impact of Surrogate Sample Scale.}
The surrogate sample scale quantifies the adversary’s data resources, measured as the fraction of the surrogate dataset $\mathcal{D}_{\text{atk}}$ compared to the total training dataset $\mathcal{D}_{\text{train}}$, as introduced in Section~\ref{subsec:adversary_assumption}.
To capture both low-resource and moderately informed adversaries, We also assess the influence of the surrogate samples number attacker used during trigger optimization, 
as Figure~\ref{fig:surrogate_sample_ablation} shows. Experimental settings range from using as little as 5\% of the available data up to 20\%. Results show that even at the lowest sample ratios, the attack remains highly effective, with ASR typically above 90\% and only minor drops in CA. As the surrogate sample fraction increases, ASR quickly approaches its theoretical maximum, with little impact on the benign accuracy. For example, with just 5\% of samples, Eminence achieves ASR values in the 89.9–92.5\% range with low ACC drop. At 15–20\% samples, the ASR reliably reaches 99.5–100.0\%, and clean accuracy is effectively preserved. These findings confirm the data efficiency and practicality of our method: it can mount strong attacks even when the attacker’s knowledge or data access is severely constrained.

These ablations show that \textit{Eminence} delivers highly effective and stealthy backdoor attacks even under constrained conditions, consistently outperforming prior SOTA methods.
% \begin{figure}[]
%   \centering
%   \includegraphics[width=0.48\textwidth]{figure/surrogate_sample_ablation.pdf}
%    \caption{Ablation of Surrogate Samples.}
%   \label{fig:surrogate_sample_ablation}
% \end{figure}

% \begin{figure}[]
%   \centering
%   \includegraphics[width=0.48\textwidth]{figure/trigger_weight_ablation.pdf}
%    \caption{Ablation of Noise weight.}
%   \label{fig:trigger_weight_ablation}
% \end{figure}

\begin{figure}[]
  \centering
   \begin{subfigure}[t]{0.22\textwidth}
    \centering
    \includegraphics[width=\textwidth]{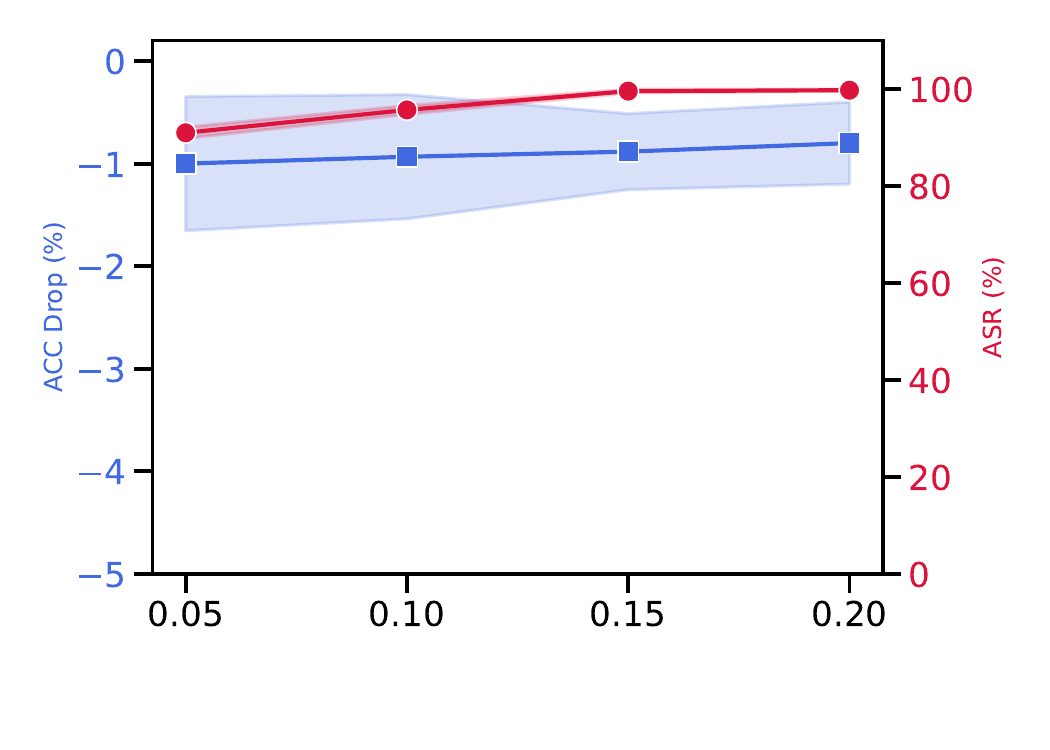}
    \caption{Noise Weight Ablation.}
    \label{fig:trigger_weight_ablation}
  \end{subfigure}
  \hfill
  \begin{subfigure}[t]{0.22\textwidth}
    \centering
    \includegraphics[width=\textwidth]{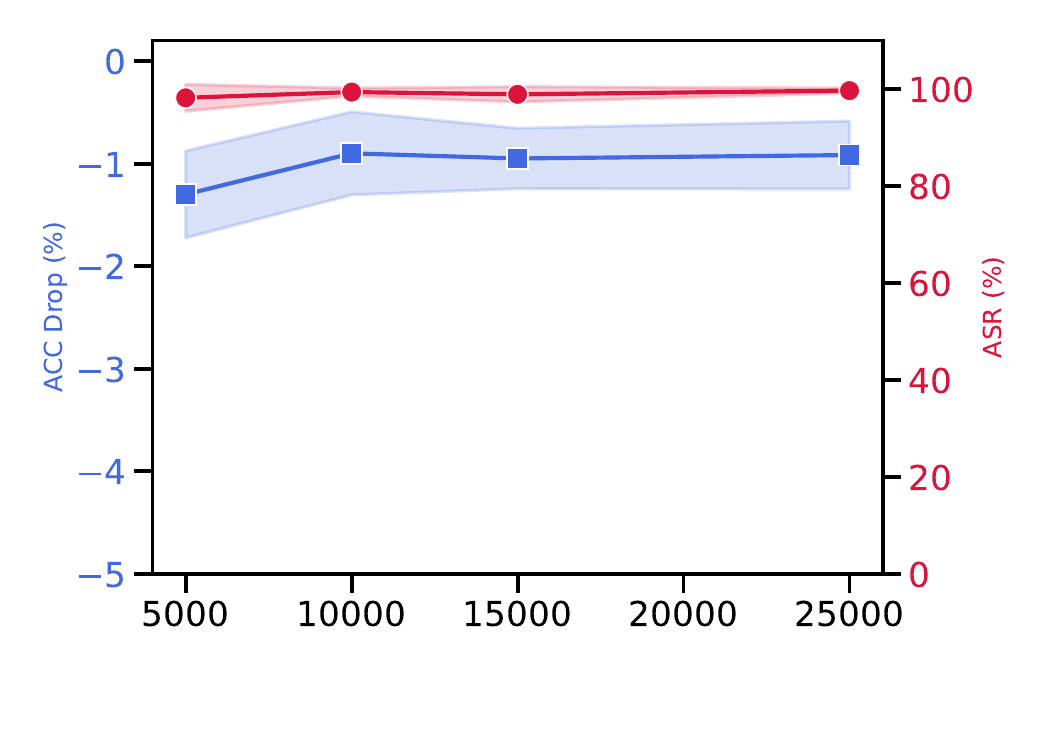}
    \caption{Surrogate Samples Ablation.}
    \label{fig:surrogate_sample_ablation}
  \end{subfigure}
  \caption{Ablation study of \textit{Eminence}. (a) Impact of trigger noise weight. (b) Impact of the number of surrogate samples used for trigger optimization. The blue line indicates ACC Drop, and the red line denotes ASR.}
  \label{fig:ablation_combined}
  \vspace{-1em}
\end{figure}

\section{Conclusion}
We have introduced a principled, boundary-seeking backdoor framework that unifies theoretical analysis with practical attack design. By characterizing a closed-form ``\textit{\textbf{ambiguous boundary region}}'', and quantifying its leverage through influence functions, we showed both analytically and empirically that relabelling only a handful of margin samples is sufficient to steer the decision surface while preserving benign accuracy. Leveraging this insight, our Eminence pipeline learns a single, visually subtle trigger that generalizes across clean- and dirty-label settings, transfers to unseen architectures, and attains 
high attack success with low poison budgets, an order of magnitude smaller than prior SOTA methods, while incurring negligible utility loss. Extensive evaluations confirm the robustness and stealth of the \textit{Eminence}.
% We presented \textit{Eminence}, a principled boundary-aware backdoor framework that couples theoretical analysis with practical attack design.
% By exploiting a closed-form “\textit{\textbf{ambiguous boundary region}},” we showed that relabelling only a few margin samples can steer the decision surface with minimal utility loss.
% \textit{Eminence} learns a visually subtle trigger that generalizes across clean- and dirty-label settings, transfers to unseen architectures, and achieves high ASR with poison budgets an order of magnitude smaller than prior SOTA.
% Extensive experiments validate its robustness and stealth.

\section*{Acknowledgment}
% This work was partly supported by NSFC under No. U244120033, U24A20336, 62172243, 62402425 and 62402418, the China Postdoctoral Science Foundation under No. 2024M762829, the Zhejiang Provincial Natural Science Foundation under No. LD24F020002, the "Pioneer and Leading Goose" R\&D Program of Zhejiang under No. 2025C02033 and 2025C01082, and the Zhejiang Provincial Priority-Funded Postdoctoral Research Project under No. ZJ2024001.
This work was partly supported by NSFC under No. U2441239, U24A20336, 62172243, 62402425 and 62402418, the China Postdoctoral Science Foundation under No. 2024M762829, the Zhejiang Provincial Natural Science Foundation under No. LD24F020002, the "Pioneer and Leading Goose" R\&D Program of Zhejiang under No. 2025C02033 and 2025C01082, and the Zhejiang Provincial Priority-Funded Postdoctoral Research Project under No. ZJ2024001.

\bibliographystyle{ACM-Reference-Format}
\bibliography{acmart}

% \newpage

\appendix
% \appendix

\section{Proof of Proposition~\ref{prop:absorb}}
\label{appendix:proof_prop_1}

\noindent\textbf{Assumptions.}
We clarify and refine the assumptions under the original notation system as follows:
\begin{enumerate}
    \item \label{as:lin_rev}
    \textbf{Linear Classifier.}\;
    $g_\theta(z) = \arg\max_{c\in\mathcal C} \langle w_c, z \rangle + b_c$. For simplicity, we absorb the bias $b_c$ into the weights $w_c$.
    
    \item \label{as:feat_rev}
    \textbf{Bounded Feature \& Gradient.}\;
    For any input $x$, its feature norm is bounded: $\|F_\theta(x)\|_2 \le R$. In addition, we assume the per-sample loss gradient with respect to parameters is bounded: $\|\nabla_\theta \ell(f_\theta(x), y)\|_2 \le G$.
    
    \item \label{as:loss_rev}
    \textbf{$\lambda_{\min}$-strongly Convex Loss.}\;
    The loss is defined as 
    \[
    \mathcal{L}(\theta) = \frac{1}{N} \sum_i \ell(f_\theta(x_i), y_i),
    \]
    whose Hessian at the clean optimum $\theta^\star$ satisfies $H_{\theta^\star} := \nabla^2_\theta \mathcal{L}(\theta^\star) \succeq \lambda_{\min} I$. This is a strong assumption to ensure the tractability of theoretical analysis.
    
    \item \label{as:margin_rev}
    \textbf{Margin Density Near Zero.}\;
    For clean data,
    \[
    \Pr\bigl[
      |m_y(x) - \max_{c\neq y} m_c(x)| \le t
    \bigr] \le \nu t, \quad
    \forall t\in(0,1].
    \]
    
    \item \label{as:beta_rev}
    \textbf{Independent Per-sample Influence.}\;
    We model the total parameter change as $\Delta\theta = \sum_{i=1}^k \Delta\theta_i$, where $\Delta\theta_i$ denotes the independent contribution caused by the $i$-th poisoned sample, and its norm is bounded: $\|\Delta\theta_i\|_2 \le \beta'$.
\end{enumerate}

% \vspace{-0.5em}

%---------------------------------------------------------------------
\noindent\textbf{Step 1 — Parameter Drift Bound.}
At the clean model's optimal parameter $\theta^\star$, the gradient of the loss is zero: $\nabla \mathcal{L}(\theta^\star) = 0$. Let
\[
  g
  := \sum_{x' \in \mathcal{D}_{\mathrm{p}}}
     \nabla_\theta \ell(f_{\theta^\star}(x'), y'),
\]
According to influence function analysis, the total parameter drift $\Delta\theta$ caused by $k$ poisoned samples can be approximated by $\Delta\theta \approx -\frac{\eta}{N}\, H_{\theta^\star}^{-1} g$. Using Assumptions~\ref{as:feat_rev} and \ref{as:loss_rev}, we can bound its norm:
\[
    \|\Delta\theta\|_2 \le \frac{\eta}{N} \|H_{\theta^\star}^{-1}\|_2 \|g\|_2 \le \frac{\eta}{N} \cdot \frac{1}{\lambda_{\min}} \cdot (kG),
\]
We thus define the parameter drift bound $\rho$ as
\begin{equation}\label{eq:rho_revised}
  \boxed{
    \|\Delta\theta\|_2
    \le
    \frac{\eta k G}{N\lambda_{\min}}
    =: \rho.
  }
\end{equation}

%---------------------------------------------------------------------
\noindent\textbf{Step 2 — Exponential Bound on ASR.}

\textit{(i) Initial margin and bounded increment.}\;
For any feature $z$ in the ambiguous region $\mathcal{B}_{\varepsilon}(c^\star, y')$, its initial margin satisfies $m_{y'}(z) - m_{c^\star}(z) \ge -\varepsilon$. By Assumption~\ref{as:beta_rev}, the total parameter shift can be seen as a sum of $k$ independent contributions, $\Delta\theta = \sum_{i=1}^k \Delta\theta_i$. The contribution of the $i$-th sample to the margin is
\[
  Y_i
  :=
  \left\langle
     \Delta w_{y'}^{(i)}-\Delta w_{c^\star}^{(i)},\, z
  \right\rangle.
\]
The variables $\{Y_i\}$ are independent and bounded: $|Y_i| \le 2R\beta' =: B$.

\textit{(ii) Concentration.}\;
The attack is constructed so that $\mathbb{E}[Y_i] = \mu > 0$. By Hoeffding's inequality, the probability that the sum $\sum Y_i$ deviates significantly from its mean is exponentially small:
\[
  \Pr\!\Biggl[\sum_{i=1}^k Y_i < \frac{k\mu}{2}\Biggr]
  \le
  \exp\!\Biggl(-\frac{k\mu^{2}}{8B^{2}}\Biggr)
  =:e^{-ck},\qquad
  \text{where } c = \frac{\mu^{2}}{8B^{2}},
\]
Thus, $\sum_{i=1}^{k}Y_i \ge \frac{k\mu}{2}$ holds with probability at least $1-e^{-ck}$.

\textit{(iii) Margin sign after poisoning.}\;
The new post-attack margin is
\[
  m'_{y'}(z)-m'_{c^\star}(z)
  =
  \bigl(m_{y'}(z)-m_{c^\star}(z)\bigr) + \sum_{i=1}^k Y_i
  \;\ge\;
  -\varepsilon + \frac{k\mu}{2}.
\]
As long as $\frac{k\mu}{2} > \varepsilon$, the right side is positive and the classification result flips to $y'$. Hence, the attack failure probability $\xi$ decays exponentially in $k$, i.e., $\xi = \mathcal{O}(e^{-k})$.

% \vspace{-0.2em}
%---------------------------------------------------------------------
\noindent\textbf{Step 3 — Linear Bound on CA.}
For a clean sample $(x, y)$, via Equation~~\ref{eq:rho_revised}, the change in the logit for any class $c$ is bounded by $\rho$:
\[
  |m'_c(x)-m_c(x)| = |\langle \Delta w_c, F_\theta(x)\rangle| \le \|\Delta\theta\|_2 R \le R\rho,
\]
Therefore, the post-attack margin $r'(x) = m'_y(x) - \max_{c\neq y} m'_c(x)$ satisfies $r'(x) \ge r(x) - 2R\rho$. The probability $\gamma$ that the model misclassifies is bounded by
% \[
%   \boxed{
%     \gamma = \Pr_{(x, y)}
%       \bigl[r'(x) < 0\bigr]
%     \le
%     \Pr\bigl[r(x) < 2R\rho\bigr]
%     \le
%     \nu(2R\rho)
%   }
%   \;\Longrightarrow\;
%   \gamma = \mathcal{O}(\rho).
% \]
\[
\begin{aligned}
  \boxed{
    \gamma = \Pr_{(x, y)}\bigl[r'(x) < 0\bigr]
      \le \Pr\bigl[r(x) < 2R\rho\bigr]
      \le \nu(2R\rho)
  }
  \\[4pt]
  \Longrightarrow\;
  \gamma = \mathcal{O}(\rho).
\end{aligned}
\]

% \vspace{-0.2em}
%---------------------------------------------------------------------
\noindent\textbf{Step 4 — Conclusion.}
Combining the above steps, we establish the two core properties of this attack. Let $\theta' = \theta^\star + \Delta\theta$:
\begin{align}
  \Pr_{x\sim\mathcal{B}_{\varepsilon}}
    \bigl[g_{\theta'} \circ
         F_{\theta'}(x)=y'\bigr]
  &\ge 1-\mathcal{O}(e^{-k}), \label{eq:attack_succ_revised}\\[1ex]
  \Pr_{(x, y)\sim\mathcal{D}_{\text{clean}}}
    \bigl[g_{\theta'} \circ
         F_{\theta'}(x)=y\bigr]
  &\ge 1-\mathcal{O}(\rho). \label{eq:clean_acc_revised}
\end{align} 
This demonstrates that the attack success rate grows exponentially with the number of poisoned samples $k$, while the clean accuracy drop scales linearly with the poisoning ratio $k/N$ and remains controllable. \qed

\section{Proof of Proposition~\ref{prop:eminence_band}}
\label{app:proof_band}

\noindent\textbf{Assumptions.}
We restate the three conditions in Proposition~\ref{prop:eminence_band} in a compact form:
\begin{enumerate}
  \item \textbf{Angular Separability.}\;
        $\displaystyle \max_{c\neq c'} \cos(\mu_c,\mu_{c'}) \le \gamma < 1$.
  \item \textbf{Bounded Aggregation Radius.}\;
        $\mathcal{L}_{\text{agg}}(\varphi^\star) \le \varepsilon$ implies
        $r := \max_i \|z_i' - \bar z^\star\|_2 \le R(\varepsilon)$ for some monotone $R(\cdot)$.
  \item \textbf{Prototype Balance.}\;
        $\displaystyle \max_{c_1,c_2} |\Delta_{c_1,c_2}(\bar z^\star)| \le \kappa$.
\end{enumerate}
Here $z_i' = F_\theta(t(x_i;\varphi^\star))$, $\bar z^\star$ is their centroid, and
\[
  \Delta_{c_1,c_2}(\bar z^\star)
  =
  \frac{
    \bigl\langle
      \bar z^\star - \tfrac{\mu_{c_1}+\mu_{c_2}}{2},\,
      \mu_{c_1}-\mu_{c_2}
    \bigr\rangle
  }{\|\mu_{c_1}-\mu_{c_2}\|_2}.
\]

% \vspace{-0.4em}
%---------------------------------------------------------------------
\noindent\textbf{Step 1 — From $\mathcal{L}_{\text{agg}}$ to a Radius Bound.}
Recall the symmetric variance identity
\[
  \frac{2}{B(B-1)}\!\sum_{i<j}\!\|z_i'-z_j'\|_2^2
  \;=\;
  \frac{2}{B-1}\sum_{i=1}^{B}\|z_i'-\bar z^\star\|_2^2,
\]
Since $\mathcal{L}_{\text{agg}}$ upper-bounds (up to a constant) the left-hand
side, we obtain
\[
  \sum_{i=1}^{B}\|z_i'-\bar z^\star\|_2^2
  \;\le\; \tfrac{B-1}{2}\,\mathcal{L}_{\text{agg}}(\varphi^\star)
  \;\le\; \tfrac{B-1}{2}\,\varepsilon.
\]
Hence
\[
  r := \max_i \|z_i' - \bar z^\star\|_2
  \;\le\; \sqrt{\tfrac{B-1}{2}\,\varepsilon}
  \;=:\; R(\varepsilon),
\]
which justifies the existence of a non-decreasing $R(\varepsilon)=O(\sqrt{\varepsilon})$.\footnote{When the anchor form is used in practice, one can still bound $r$ by noting the anchor is also within $R(\varepsilon)$ of the centroid with high probability (via mini-batch reshuffling), so the same order bound holds.}

% \vspace{-0.4em}
%---------------------------------------------------------------------
\noindent\textbf{Step 2 — Bounding the Center's Signed Distance.}
For any class pair $(c_1,c_2)$, define the unit direction
\[
  u_{c_1,c_2}
  :=
  \frac{\mu_{c_1}-\mu_{c_2}}{\|\mu_{c_1}-\mu_{c_2}\|_2}.
\]
By definition of $\Delta_{c_1,c_2}$,
\[
  \left|
    \bigl\langle
      \bar z^\star - \tfrac{\mu_{c_1}+\mu_{c_2}}{2},\,
      u_{c_1,c_2}
    \bigr\rangle
  \right|
  \le \kappa.
\]
This bounds the centre’s (signed) offset from the mid-hyperplane.

% \vspace{-0.4em}
%---------------------------------------------------------------------
\noindent\textbf{Step 3 — Triangle Inequality on Projections.}
For any $z_i'\in\mathcal{Z}^\star$,
% \[
%   \begin{aligned}
%   \Bigl|
%     \bigl\langle
%       z_i' - \tfrac{\mu_{c_1}+\mu_{c_2}}{2},\,
%       u_{c_1,c_2}
%     \bigr\rangle
%   \Bigr|
%   &= \Bigl|
%       \bigl\langle
%         (z_i'-\bar z^\star) + (\bar z^\star-\tfrac{\mu_{c_1}+\mu_{c_2}}{2}),\,
%         u_{c_1,c_2}
%       \bigr\rangle
%     \Bigr| \\
%   &\le
%      \underbrace{
%        \bigl|\langle z_i'-\bar z^\star,\, u_{c_1,c_2}\rangle\bigr|
%      }_{\le \|z_i'-\bar z^\star\|_2 \le r}
%      +
%      \underbrace{
%        \bigl|\langle \bar z^\star-\tfrac{\mu_{c_1}+\mu_{c_2}}{2},\,
%                      u_{c_1,c_2}\rangle\bigr|
%      }_{\le \kappa} \\
%   &\le r + \kappa
%    \;\le\; R(\varepsilon) + \kappa.
%   \end{aligned}
% \]
% \[
% \begin{array}{@{}l@{}}
% \Bigl|
%   \bigl\langle
%     z_i' - \tfrac{\mu_{c_1}+\mu_{c_2}}{2},\,
%     u_{c_1,c_2}
%   \bigr\rangle
% \Bigr|
% =
% \Bigl|
%   \bigl\langle
%     (z_i'-\bar z^\star) + (\bar z^\star-\tfrac{\mu_{c_1}+\mu_{c_2}}{2}),\,
%     u_{c_1,c_2}
%   \bigr\rangle
% \Bigr|
% \\[4pt]
% \le
% \underbrace{
%   \bigl|\langle z_i'-\bar z^\star,\, u_{c_1,c_2}\rangle\bigr|
% }_{\le \|z_i'-\bar z^\star\|_2 \le r}
% +
% \underbrace{
%   \bigl|\langle \bar z^\star-\tfrac{\mu_{c_1}+\mu_{c_2}}{2},\, u_{c_1,c_2}\rangle\bigr|
% }_{\le \kappa}
% \\[4pt]
% \le r + \kappa \;\le\; R(\varepsilon)+\kappa
% \end{array}
% \]
\[
\begin{aligned}
\Bigl|
  \bigl\langle
    z_i' - \tfrac{\mu_{c_1}+\mu_{c_2}}{2},\, u_{c_1,c_2}
  \bigr\rangle
\Bigr|
\\
= \Bigl|
  \bigl\langle
    (z_i'-\bar z^\star) + (\bar z^\star-\tfrac{\mu_{c_1}+\mu_{c_2}}{2}),\, u_{c_1,c_2}
  \bigr\rangle
\Bigr|
\\[4pt]
\le
\underbrace{
  \bigl|\langle z_i'-\bar z^\star,\, u_{c_1,c_2}\rangle\bigr|
}_{\le \|z_i'-\bar z^\star\|_2 \le r}
+
\underbrace{
  \bigl|\langle \bar z^\star-\tfrac{\mu_{c_1}+\mu_{c_2}}{2},\, u_{c_1,c_2}\rangle\bigr|
}_{\le \kappa}
\\[4pt]
\le r + \kappa \;\le\; R(\varepsilon)+\kappa.
\end{aligned}
\]

Thus, for every pair $(c_1,c_2)$,
\[
  \bigl|
    \langle
       z_i' - \tfrac{\mu_{c_1}+\mu_{c_2}}{2},\,
       u_{c_1,c_2}
    \rangle
  \bigr|
  \le R(\varepsilon)+\kappa.
\]

% \vspace{-0.4em}
%---------------------------------------------------------------------
\noindent\textbf{Step 4 — Band Inclusion \& Intersection.}
By the definition of $\mathcal{B}_{\tilde\varepsilon}(c_1,c_2)$,
taking $\tilde\varepsilon = R(\varepsilon)+\kappa$ gives
\[
  z_i' \in \mathcal{B}_{\tilde\varepsilon}(c_1,c_2),
  \quad \forall (c_1,c_2).
\]
Therefore,
\[
  z_i' \in
  \mathcal{B}_{\tilde\varepsilon}
  :=
  \bigcap_{c_1<c_2}
     \mathcal{B}_{\tilde\varepsilon}(c_1,c_2),
  \qquad
  \tilde\varepsilon = \kappa + R(\varepsilon),
\]
which completes the proof. \qed

\section{Trigger–optimization Dynamics}
\label{app:trigger-opt}
%===============================================================
This appendix formalizes \textit{Stage 1: Trigger Optimization} in Section~\ref{subsec:attack_workflow}.

%---------------------------------------------------------------------
\noindent\textbf{Notations.}
% Let $\mathcal{D}_{\!B}=\{x_i\}_{i=1}^{B}$ be a mini-batch of \textbf{clean}
% inputs. Denote by $F_\theta$ the frozen surrogate feature extractor and
% by $t(x;\varphi)=(1-\alpha)x+\alpha\varphi$ the affine trigger ($\alpha\in[0,1]$).
% Write 
% $z_i(\varphi)=F_\theta(t(x_i;\varphi))$, $\bar z(\varphi)=\tfrac1B\sum_i z_i(\varphi)$,
% and detail the aggregation loss:
We detail the aggregation loss:
\[
  \mathcal{L}_{\text{agg}}(\varphi)=
  \frac{2}{B(B-1)}\sum_{i<j}\!\|z_i(\varphi)-z_j(\varphi)\|_2^2.
\]
We perform projected gradient descent (PGD) on $\varphi$ under the $\ell_\infty$-constraint:
\[
  \varphi^{(k+1)}
  =\Pi_{\|\cdot\|_\infty\le\delta}\!\Bigl[
      \varphi^{(k)}-\eta_k \nabla_\varphi \mathcal{L}_{\text{agg}}(\varphi^{(k)})
    \Bigr],
  \qquad \eta_k>0.
\]

%---------------------------------------------------------------------
\noindent\textbf{Assumptions.}
\begin{enumerate}
  \item \textbf{Feature Lipschitzness.}\;
        $F_\theta$ is $L$-Lipschitz and $\|F_\theta(x)\|_2\le R_x$ for all $x$.
  \item \textbf{Bounded Input Scale.}\;
        Inputs are rescaled into a compact set so that $t(\cdot;\varphi)$ is $\alpha$-Lipschitz in $\varphi$.
\end{enumerate}

%---------------------------------------------------------------------
\noindent\textbf{Smoothness of $\mathcal{L}_{\mathrm{agg}}$.}
\begin{lemma}[Lipschitz Gradient]
\label{lem:lipschitz}
Under the above assumptions, $\mathcal{L}_{\mathrm{agg}}$ is $4\alpha^2L^2$–smooth in $\varphi$, i.e.,
\[
\bigl\|
  \nabla\mathcal{L}_{\mathrm{agg}}(\varphi)
  - \nabla\mathcal{L}_{\mathrm{agg}}(\varphi')
\bigr\|_2
\;\le\;
4\alpha^2L^2\,\|\varphi-\varphi'\|_2 \,.
\]
\end{lemma}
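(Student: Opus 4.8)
The plan is to compute $\nabla_\varphi\mathcal L_{\mathrm{agg}}$ explicitly by the chain rule and then bound the spectral norm of its Jacobian — the Hessian of $\mathcal L_{\mathrm{agg}}$ — which controls the gradient-Lipschitz modulus. First I would record the per-sample regularity everything rests on: since $t(\cdot;\varphi)=(1-\alpha)x+\alpha\varphi$ is affine and $\alpha$-Lipschitz in $\varphi$ (Assumption~2) and $F_\theta$ is $L$-Lipschitz (Assumption~1), the composite $z_i(\varphi)=F_\theta(t(x_i;\varphi))$ is $\alpha L$-Lipschitz in $\varphi$, and wherever $F_\theta$ is differentiable its Jacobian $J_i(\varphi):=\nabla_\varphi z_i(\varphi)$ satisfies $\|J_i(\varphi)\|_{\mathrm{op}}\le\alpha L$; moreover $\|z_i(\varphi)-z_j(\varphi)\|_2\le\|F_\theta(t(x_i;\varphi))\|_2+\|F_\theta(t(x_j;\varphi))\|_2\le 2R_x$.

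Differentiating $\mathcal L_{\mathrm{agg}}(\varphi)=\tfrac{2}{B(B-1)}\sum_{i<j}\|z_i(\varphi)-z_j(\varphi)\|_2^2$ gives $\nabla_\varphi\mathcal L_{\mathrm{agg}}(\varphi)=\tfrac{4}{B(B-1)}\sum_{i<j}(J_i(\varphi)-J_j(\varphi))^{\top}(z_i(\varphi)-z_j(\varphi))$. Because $t$ is affine and $F_\theta$ is piecewise linear (the locally-constant-Jacobian regime), on the interior of each activation cell $J_i$ is constant and $z_i(\varphi)-z_i(\varphi')=J_i(\varphi-\varphi')$, so $\nabla_\varphi\mathcal L_{\mathrm{agg}}(\varphi)-\nabla_\varphi\mathcal L_{\mathrm{agg}}(\varphi')=\tfrac{4}{B(B-1)}\bigl[\sum_{i<j}(J_i-J_j)^{\top}(J_i-J_j)\bigr](\varphi-\varphi')$. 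Testing the bracketed PSD matrix against a unit vector $v$ and writing $u_i=J_iv$ with $\|u_i\|\le\alpha L$, the variance identity $\sum_{i<j}\|u_i-u_j\|^2=B\sum_i\|u_i\|^2-\|\sum_i u_i\|^2\le B\sum_i\|u_i\|^2\le B^2\alpha^2L^2$ yields $\bigl\|\sum_{i<j}(J_i-J_j)^{\top}(J_i-J_j)\bigr\|_{\mathrm{op}}\le B^2\alpha^2L^2$, hence a modulus of at most $\tfrac{4B}{B-1}\alpha^2L^2$; absorbing the harmless $B/(B-1)$ factor (or, more crudely, using $\|J_i-J_j\|_{\mathrm{op}}^2\le 4\alpha^2L^2$ per pair against the weights $\tfrac{2}{B(B-1)}$) gives the stated $4\alpha^2L^2$. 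To pass from the per-cell estimate to a global one I would integrate the Hessian along the segment joining $\varphi'$ and $\varphi$, which meets only finitely many cells, on each of which the integrand has norm at most $4\alpha^2L^2\|\varphi-\varphi'\|$.

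The step I expect to be the real obstacle is the second-order term that a fully general chain rule would produce, namely $\tfrac{4}{B(B-1)}\sum_{i<j}\bigl(D^2_\varphi(z_i-z_j)\bigr)[\,\cdot\,](z_i-z_j)$: this is not controlled by Assumptions~1--2, which only grant Lipschitzness, not a Lipschitz Jacobian, of $F_\theta$. The clean resolution is precisely the locally-linear structure invoked above ($t$ affine, $F_\theta$ piecewise linear $\Rightarrow$ each $J_i$ locally constant $\Rightarrow$ this term vanishes on cell interiors), and I would flag in the write-up that the lemma's constant implicitly assumes this regime. If one instead wants general $C^1$ feature maps, one must add that $F_\theta$ has $L'$-Lipschitz Jacobian and bound the extra term by $O(\alpha^2 L' R_x)$ (using $\|z_i-z_j\|_2\le 2R_x$), enlarging the modulus to $4\alpha^2L^2+O(\alpha^2 L' R_x)$ — which is also the only place the otherwise-unused hypothesis $\|F_\theta(x)\|_2\le R_x$ would enter.
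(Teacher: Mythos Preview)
Your proposal is correct and follows essentially the same route as the paper's own proof (establish $\|z_i(\varphi)-z_i(\varphi')\|_2\le\alpha L\|\varphi-\varphi'\|_2$ from the $\alpha$-Lipschitz trigger and $L$-Lipschitz $F_\theta$, then expand $\mathcal L_{\mathrm{agg}}$ and invoke the pairwise/variance structure), but is considerably more careful: the paper gives only a two-line sketch appealing to ``standard variance-smoothness arguments.'' Your explicit flagging of the second-order chain-rule term --- and its resolution via the piecewise-linear (locally constant Jacobian) structure of $F_\theta$ --- is a genuine point the paper's sketch elides, and the residual $B/(B-1)$ factor you notice is likewise absorbed without comment there.
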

\begin{proof}
For any $\varphi,\varphi'$, we have
\[
\|z_i(\varphi)-z_i(\varphi')\|_2
\;\le\;
L\,\|t(x_i;\varphi)-t(x_i;\varphi')\|_2
\;\le\;
\alpha L\,\|\varphi-\varphi'\|_2 \,.
\]
Expanding $\mathcal{L}_{\mathrm{agg}}$ and applying standard
variance-smoothness arguments yields the claim.
\end{proof}

%---------------------------------------------------------------------
\noindent\textbf{Convergence of PGD.}
\begin{prop}[PGD Convergence]
\label{thm:pgd}
With constant step size $\eta_k=1/(4\alpha^2L^2)$, after $K$ steps
\[
  \min_{0\le k<K}\!
    \bigl\|\nabla_\varphi\mathcal{L}_{\mathrm{agg}}(\varphi^{(k)})\bigr\|_2^2
  \;\le\;
  \frac{8\alpha^2L^2}{K}\,
  \bigl[\mathcal{L}_{\mathrm{agg}}(\varphi^{(0)})-\mathcal{L}_{\mathrm{agg}}^\star\bigr],
\]
where $\mathcal{L}_{\mathrm{agg}}^\star$ is the minimum over $\|\varphi\|_\infty\le\delta$.
Hence an $\varepsilon$-stationary point is reached in
$K=O(\alpha^2L^2\varepsilon^{-2})$ iterations.
\end{prop}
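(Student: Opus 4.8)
Proposition~\ref{thm:pgd} is the classical convergence guarantee for projected gradient descent on a smooth, possibly nonconvex objective, so the plan is the standard two-step argument: establish a per-iteration \emph{sufficient decrease} inequality, then telescope. The only structural inputs needed are (i) the $4\alpha^2L^2$-smoothness of $\mathcal{L}_{\mathrm{agg}}$ from Lemma~\ref{lem:lipschitz}, (ii) the variational characterization of the Euclidean projection onto the convex feasible set $\{\varphi:\|\varphi\|_\infty\le\delta\}$, and (iii) the fact that $\mathcal{L}_{\mathrm{agg}}\ge 0$ with feasible set compact, so $\mathcal{L}_{\mathrm{agg}}^\star$ is finite and attained.

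\textbf{Step 1 (sufficient decrease).} Write $\varphi^{+}=\Pi_{\|\cdot\|_\infty\le\delta}[\varphi-\eta\nabla_\varphi\mathcal{L}_{\mathrm{agg}}(\varphi)]$ with $\eta=1/(4\alpha^2L^2)$, and let $G(\varphi):=\eta^{-1}(\varphi-\varphi^{+})$ be the (Euclidean) gradient mapping. Optimality of $\varphi^{+}$ in the projection subproblem gives $\langle \varphi-\eta\nabla\mathcal{L}_{\mathrm{agg}}(\varphi)-\varphi^{+},\,\varphi-\varphi^{+}\rangle\le 0$, hence $\langle\nabla\mathcal{L}_{\mathrm{agg}}(\varphi),\varphi^{+}-\varphi\rangle\le-\eta^{-1}\|\varphi^{+}-\varphi\|_2^2$. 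Combining this with the descent lemma $\mathcal{L}_{\mathrm{agg}}(\varphi^{+})\le\mathcal{L}_{\mathrm{agg}}(\varphi)+\langle\nabla\mathcal{L}_{\mathrm{agg}}(\varphi),\varphi^{+}-\varphi\rangle+2\alpha^2L^2\|\varphi^{+}-\varphi\|_2^2$ and using $\eta^{-1}=4\alpha^2L^2$, the two quadratic terms partially cancel and $\|\varphi^{+}-\varphi\|_2^2=\eta^2\|G(\varphi)\|_2^2$ yields $\mathcal{L}_{\mathrm{agg}}(\varphi^{+})\le\mathcal{L}_{\mathrm{agg}}(\varphi)-\tfrac{1}{8\alpha^2L^2}\|G(\varphi)\|_2^2$.

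\textbf{Step 2 (telescoping and complexity).} Applying Step 1 at $\varphi^{(0)},\dots,\varphi^{(K-1)}$ and summing, the left-hand sides telescope and are bounded by $\mathcal{L}_{\mathrm{agg}}(\varphi^{(0)})-\mathcal{L}_{\mathrm{agg}}(\varphi^{(K)})\le\mathcal{L}_{\mathrm{agg}}(\varphi^{(0)})-\mathcal{L}_{\mathrm{agg}}^\star$, giving $\sum_{k=0}^{K-1}\|G(\varphi^{(k)})\|_2^2\le 8\alpha^2L^2[\mathcal{L}_{\mathrm{agg}}(\varphi^{(0)})-\mathcal{L}_{\mathrm{agg}}^\star]$. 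Lower-bounding the sum by $K\min_{0\le k<K}\|G(\varphi^{(k)})\|_2^2$ and dividing by $K$ gives the stated rate, where $\nabla_\varphi\mathcal{L}_{\mathrm{agg}}$ in the theorem is read as the gradient mapping $G$; whenever the $\ell_\infty$ constraint is inactive along the trajectory (the regime relevant to an imperceptible trigger, where $\delta$ is not binding), $G(\varphi^{(k)})=\nabla_\varphi\mathcal{L}_{\mathrm{agg}}(\varphi^{(k)})$ and the bound holds verbatim. Setting the right-hand side equal to $\varepsilon^2$ and solving for $K$ gives $K=O(\alpha^2L^2\varepsilon^{-2})$ iterations to reach an $\varepsilon$-stationary point.

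\textbf{Main obstacle.} The computations are routine; the one genuine subtlety is that on a constrained domain the correct stationarity measure is the gradient mapping, not the raw gradient, so the cleanest rigorous statement replaces $\|\nabla_\varphi\mathcal{L}_{\mathrm{agg}}\|$ by $\|G\|$ (equivalently, one assumes an interior/inactive-constraint regime). A secondary point worth one line is norm compatibility: Lemma~\ref{lem:lipschitz} provides $\ell_2$-smoothness while the feasible set is an $\ell_\infty$ ball, but $\Pi_{\|\cdot\|_\infty\le\delta}$ is still the \emph{Euclidean} projection onto a convex set and hence $1$-Lipschitz in $\ell_2$, so the descent lemma and the projection inequality are applied consistently in the same $\ell_2$ inner product.
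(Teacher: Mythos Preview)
Your proof is correct and follows exactly the same projection-optimality $+$ descent-lemma $+$ telescoping route as the paper's Appendix~\ref{app:proof_pgd}. Your explicit flag about the gradient mapping is in fact sharper than the paper itself, which writes the raw gradient $\|g^{(k)}\|_2^2$ in its sufficient-decrease inequality without distinguishing it from $\|G(\varphi^{(k)})\|_2^2$; your interior/inactive-constraint reading is the right way to reconcile the stated bound with the constrained setting.
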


%---------------------------------------------------------------------
\noindent\textbf{Sample Complexity for Radius Estimation.}
Let $R(\varepsilon)$ denote the radius upper bound implied by an $\varepsilon$-optimal value of $\mathcal{L}_{\mathrm{agg}}$ (Proposition~\ref{prop:eminence_band}).
\begin{lemma}[Concentration of Empirical Radius]
\label{lem:radius}
For any $\xi,\delta\in(0,1)$,
\[
  \Pr\Bigl[\bigl|R_B(\varepsilon)-R(\varepsilon)\bigr|>\xi\Bigr]
  \le 2\exp\!\Bigl(-\frac{B\xi^2}{8R_x^2}\Bigr).
\]
Thus $B=O\!\left(R_x^2\xi^{-2}\log\delta^{-1}\right)$ samples suffice to estimate $R(\varepsilon)$ within $\xi$ with probability $1-\delta$.
\end{lemma}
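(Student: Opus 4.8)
The plan is to treat $R_B(\varepsilon)$ as a bounded, stable functional of the $B$ i.i.d.\ samples $x_1,\dots,x_B$ (with triggered features $z_i'=F_\theta(t(x_i;\varphi^\star))$ and empirical centroid $\bar z^\star$) and to obtain the tail bound from McDiarmid's bounded-differences inequality, identifying $R(\varepsilon)$ with $\mathbb{E}[R_B(\varepsilon)]$. First I would pin down the definition $R_B(\varepsilon)=\tfrac1B\sum_{i=1}^{B}\|z_i'-\bar z^\star\|_2$ (the root-mean-square variant is handled identically up to a $\sqrt{\cdot}$ step; see below). By Assumption~1 of Appendix~\ref{app:trigger-opt}, $\|z_i'\|_2\le R_x$ for every $i$, and since $\bar z^\star$ is a convex combination of the $z_i'$ we also have $\|\bar z^\star\|_2\le R_x$; hence every deviation term $\|z_i'-\bar z^\star\|_2$ lies in $[0,2R_x]$ and $R_B(\varepsilon)\in[0,2R_x]$.

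Second, I would establish the bounded-differences property. Replacing a single sample $x_j$ (equivalently $z_j'$) by another changes $z_j'$ by at most $2R_x$, hence shifts the centroid $\bar z^\star$ by at most $2R_x/B$. Consequently the $j$-th summand of $R_B$ changes by at most $2R_x+2R_x/B$ and each of the other $B-1$ summands by at most $2R_x/B$; after the $1/B$ averaging, the total change in $R_B$ is at most $c_j:=4R_x/B$ up to lower-order terms. McDiarmid's inequality with sensitivities $c_i\equiv 4R_x/B$ then gives $\Pr\bigl[|R_B(\varepsilon)-\mathbb{E}R_B(\varepsilon)|>\xi\bigr]\le 2\exp\!\bigl(-2\xi^2/\sum_i c_i^2\bigr)=2\exp\!\bigl(-B\xi^2/(8R_x^2)\bigr)$, which is exactly the claimed bound. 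An alternative derivation anchors at the population mean $\bar z$, applies Hoeffding directly to the i.i.d.\ sum $\tfrac1B\sum_i\|z_i'-\bar z\|_2$ of $[0,2R_x]$-valued terms, and absorbs the $O(R_x/\sqrt B)$ centroid-displacement gap into $\xi$; the bounded-differences route avoids this explicit decoupling. For the root-mean-square variant one applies the same argument to the empirical second moment and then transfers concentration via $|\sqrt a-\sqrt b|\le\sqrt{|a-b|}$, which only changes constants.

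Third, the sample-complexity statement follows by inversion: requiring $2\exp(-B\xi^2/(8R_x^2))\le\delta$ gives $B\ge 8R_x^2\xi^{-2}\log(2/\delta)=O(R_x^2\xi^{-2}\log\delta^{-1})$.

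I expect the main obstacle to be the coupling introduced by the empirical centroid $\bar z^\star$: because it depends on all samples, the summands of $R_B$ are not independent and a one-line Hoeffding bound is not directly legitimate. The resolution is the sensitivity computation above --- showing that a single replacement perturbs $R_B$ only at the $O(R_x/B)$ scale rather than $O(R_x)$ --- which is what makes McDiarmid produce the correct $1/B$ in the exponent; secondarily, one must argue (or simply define away) the $O(R_x/\sqrt B)$ bias between $\mathbb{E}R_B(\varepsilon)$ and the population radius $R(\varepsilon)$ so that the concentration is centered on the intended target.
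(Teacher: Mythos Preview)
Your proposal is correct and, in fact, more carefully worked out than the paper's own proof, which consists of a single sentence: ``Apply Hoeffding's inequality to the empirical variance proxy underlying $\mathcal{L}_{\mathrm{agg}}$.'' The paper does not spell out how to handle the centroid coupling you correctly flag as the main obstacle; your McDiarmid route resolves it cleanly by bounding the per-sample sensitivity at $c_j=4R_x/B$, whence $\sum_j c_j^2 = 16R_x^2/B$ and the bounded-differences inequality delivers exactly the stated exponent $-B\xi^2/(8R_x^2)$. Your ``alternative derivation'' (anchor at the population centroid, apply vanilla Hoeffding to the resulting i.i.d.\ $[0,2R_x]$-valued summands, then absorb the $O(R_x/\sqrt{B})$ centroid-displacement bias into $\xi$) is plausibly what the paper's one-liner has in mind but leaves implicit. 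Both routes land on the same bound; McDiarmid buys you a self-contained argument that does not require a separate bias-control step, while the anchored-Hoeffding route is marginally shorter if one is willing to fold the empirical-versus-population centroid gap into the constants.
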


\begin{proof}
Apply Hoeffding’s inequality to the empirical variance proxy underlying $\mathcal{L}_{\mathrm{agg}}$.
\end{proof}

%---------------------------------------------------------------------
% \noindent\textbf{Consequence for Band Inclusion.}
% By Proposition~\ref{thm:pgd} and Lemma~\ref{lem:radius}, a finite PGD schedule attains an $\varepsilon$-approximate minimiser with reliable radius estimate, and Proposition~\ref{prop:eminence_band} implies the triggered features satisfy
% \[
%   \mathcal{Z}^\star \subset
%   \mathcal{B}_{\tilde\varepsilon},
%   \qquad
%   \tilde\varepsilon=\kappa+R(\varepsilon).
% \]

% %---------------------------------------------------------------------
% \noindent\textbf{Conclusion.}
% Lemma~\ref{lem:lipschitz} and  Proposition~\ref{thm:pgd} guarantee stable convergence of the trigger optimisation; Lemma~\ref{lem:radius} controls estimation error; together with  Proposition~\ref{prop:eminence_band} this places $\mathcal{Z}^\star$ in the desired ambiguous band after a finite number of iterations.
\noindent\textbf{Consequence for Band Inclusion.}
By Proposition~\ref{thm:pgd} and Lemma~\ref{lem:radius}, a finite PGD schedule attains an $\varepsilon$-approximate minimizer with a reliable radius estimate, and Proposition~\ref{prop:eminence_band} yields
\[
  \mathcal{Z}^\star \subset
  \mathcal{B}_{\tilde\varepsilon},
  \qquad
  \tilde\varepsilon=\kappa+R(\varepsilon).
\]

\noindent\textbf{Conclusion.}
Lemma~\ref{lem:lipschitz} and Proposition~\ref{thm:pgd} ensure stable convergence of trigger optimization, while Lemma~\ref{lem:radius} controls estimation error, together with Proposition~\ref{prop:eminence_band}, these results place $\mathcal{Z}^\star$ within the desired ambiguous band after finitely many iterations.

\section{Proof of Proposition~\ref{thm:pgd}}
\label{app:proof_pgd}
%---------------------------------------------------------------

% We here give the detail proof of Proposition~\ref{thm:pgd}.

%---------------------------------------------------------------
\noindent\textbf{Assumptions.} We assume:
\begin{enumerate}
\item \textbf{Surrogate-feature Lipschitzness.}  
      The frozen extractor $F_\theta$ is $L$-Lipschitz:
      $\|F_\theta(x)-F_\theta(x')\|_2\le L\|x-x'\|_2$.

\item \textbf{Trigger Parametrization.}  
      A triggered input is
      \(\displaystyle t(x;\varphi)=(1-\alpha)x+\alpha\varphi\)
      with fixed $\alpha\!\in(0,1]$.

\item \textbf{Feasible Set.}  
      $\mathcal{C}:=\{\varphi\in\mathbb{R}^{d_x}\mid
        \|\varphi\|_\infty\le\delta\}$ is closed and convex;
      the projection $P_{\mathcal{C}}$ is w.r.t.\ $\ell_2$ norm.

\item \textbf{Aggregation Loss.}  
      For a mini-batch $\mathcal{D}_{\!B}=\{x_i\}_{i=1}^{B}$,
      \[
        \mathcal{L}_{\mathrm{agg}}(\varphi)=
        \frac{2}{B(B-1)}\sum_{i<j}
           \bigl\|
             F_\theta\!\bigl(t(x_i;\varphi)\bigr)
             -F_\theta\!\bigl(t(x_j;\varphi)\bigr)
           \bigr\|_2^{2}.
      \]

\item \textbf{PGD Step.}  
      $\displaystyle\varphi^{(k+1)}=
        P_{\mathcal{C}}\bigl[\varphi^{(k)}
          -\eta\nabla\mathcal{L}_{\mathrm{agg}}\bigl(\varphi^{(k)}\bigr)\bigr]$
      with \(\eta=1/(4\alpha^{2}L^{2})\).
\end{enumerate}

%---------------------------------------------------------------

Let \(g^{(k)}:=\nabla\mathcal{L}_{\mathrm{agg}}(\varphi^{(k)})\).

\noindent\textbf{Step 1 – Projection Optimality.}
For the non-expansive projection in Assumption 3,
% \[
%   \bigl\langle
%      \varphi^{(k)}-\eta g^{(k)}-\varphi^{(k+1)},\;
%      \varphi-\varphi^{(k+1)}
%   \bigr\rangle \!\le 0,\;
%   \forall\varphi\!\in\!\mathcal{C}.
%   \label{eq:proj}
% \]
\begin{equation}
  \bigl\langle
     \varphi^{(k)} - \eta g^{(k)} - \varphi^{(k+1)},\;
     \varphi - \varphi^{(k+1)}
  \bigr\rangle
  \le 0,\quad
  \forall\,\varphi \in \mathcal{C}.
  \label{eq:proj}
\end{equation}

\noindent\textbf{Step 2 – Descent Lemma.}
Take $\varphi=\varphi^{(k)}$ in Equation~\ref{eq:proj} to obtain
\begin{equation}
  \bigl\langle g^{(k)}, \varphi^{(k+1)} - \varphi^{(k)} \bigr\rangle
  \le -\frac{1}{\eta} 
     \bigl\| \varphi^{(k+1)} - \varphi^{(k)} \bigr\|_2^{2}.
  \label{eq:desecnet_lemma}
\end{equation}

\noindent\textbf{Step 3 – Smoothness Bound.}
By Lemma~\ref{lem:lipschitz},
% \[
%   \mathcal{L}_{\mathrm{agg}}(\varphi^{(k+1)})
%   \le \mathcal{L}_{\mathrm{agg}}(\varphi^{(k)})
%        +\langle g^{(k)},\varphi^{(k+1)}-\varphi^{(k)}\rangle
%        +\frac{L_f}{2}\|\varphi^{(k+1)}-\varphi^{(k)}\|_2^{2}.
% \]
\begin{equation}
\begin{aligned}
  \mathcal{L}_{\mathrm{agg}}\bigl(\varphi^{(k+1)}\bigr)
  &\le \mathcal{L}_{\mathrm{agg}}\bigl(\varphi^{(k)}\bigr)
       + \bigl\langle g^{(k)}, \varphi^{(k+1)} - \varphi^{(k)} \bigr\rangle \\
  &\quad + \frac{L_f}{2}\,\bigl\|\varphi^{(k+1)} - \varphi^{(k)}\bigr\|_2^{2}.
\end{aligned}
\end{equation}

Substituting Equation~\ref{eq:desecnet_lemma} and \(\eta=1/L_f\) gives
\begin{equation}
  \mathcal{L}_{\mathrm{agg}}\bigl(\varphi^{(k)}\bigr)
   - \mathcal{L}_{\mathrm{agg}}\bigl(\varphi^{(k+1)}\bigr)
  \ge \frac{\eta}{2}\,\|g^{(k)}\|_{2}^{2}.
  \label{eq:smooth_bound}
\end{equation}

\noindent\textbf{Step 4 – Telescoping.}
Summing Equation~\ref{eq:smooth_bound} for $k=0$ to $K-1$ and telescoping,
\[
  \frac{\eta}{2}\sum_{k=0}^{K-1}\|g^{(k)}\|_{2}^{2}
  \le
  \mathcal{L}_{\mathrm{agg}}(\varphi^{(0)})
   -\mathcal{L}_{\mathrm{agg}}^\star,
\]
whence
\[
  \min_{0\le k<K}\|g^{(k)}\|_{2}^{2}
  \le \frac{2}{\eta K}\,
       \bigl[\mathcal{L}_{\mathrm{agg}}(\varphi^{(0)})
             -\mathcal{L}_{\mathrm{agg}}^\star\bigr].
\]

\noindent\textbf{Step 5 – Iteration Complexity.}
With $\eta=1/L_f=1/(4\alpha^{2}L^{2})$ the bound becomes
\[
  \min_{k<K}\!
    \|\nabla\mathcal{L}_{\mathrm{agg}}(\varphi^{(k)})\|_{2}^{2}
  \le
  \frac{8\alpha^{2}L^{2}}{K}\,
  \bigl[\mathcal{L}_{\mathrm{agg}}(\varphi^{(0)})
        -\mathcal{L}_{\mathrm{agg}}^\star\bigr].
\]
Requiring the RHS $\le\varepsilon^{2}$ yields
\(K\ge 8\alpha^{2}L^{2}\bigl[\mathcal{L}_{\mathrm{agg}}(\varphi^{(0)})
    -\mathcal{L}_{\mathrm{agg}}^\star\bigr]/\varepsilon^{2}
 =O(\alpha^{2}L^{2}\varepsilon^{-2})\),
completing the proof.
\qed

\end{document}